\documentclass[10pt,twocolumn,letterpaper]{article}

\linespread{0.96}

\usepackage{iccv}
\usepackage{times}
\usepackage{epsfig}
\usepackage{graphicx}
\usepackage{amsmath}
\usepackage{amssymb}
\usepackage{bm}
\usepackage{amsthm}
\usepackage{multirow}
\usepackage{adjustbox}
\usepackage{caption}
\usepackage{subcaption}
\usepackage{mathtools}

\newtheorem{theorem}{Theorem}

\newcommand{\E}{\mathbb{E}}
\def \xx {\mathbf{x}}

\def \X  {\mathcal{X}}
\def \Y  {\mathcal{Y}}

\usepackage[pagebackref=true,breaklinks=true,letterpaper=true,colorlinks,bookmarks=false]{hyperref}

\iccvfinalcopy 


\ificcvfinal\pagestyle{empty}\fi
\begin{document}

\title{Symmetric Cross Entropy for Robust Learning with Noisy Labels}

\author{
Yisen Wang\textsuperscript{1}\textsuperscript{\footnotemark[1]} \footnotemark[2] \ \  Xingjun Ma\textsuperscript{2}\textsuperscript{\footnotemark[1]} \footnotemark[2] \ \ Zaiyi Chen\textsuperscript{3}\ \ Yuan Luo\textsuperscript{4}\ \ Jinfeng Yi\textsuperscript{1}\ \  James Bailey\textsuperscript{2}\\
  \textsuperscript{1}JD AI \ \ \ \ \ 
  \textsuperscript{2}The University of Melbourne \ \ \ \ \ \textsuperscript{3}Cainiao AI \ \ \ \ \ \textsuperscript{4}Shanghai Jiao Tong University  \\
}

\maketitle

\renewcommand{\thefootnote}{\fnsymbol{footnote}} 
\footnotetext[1]{Equal contribution.} 
\footnotetext[2]{Correspondence to: Yisen Wang (eewangyisen@gmail.com) and Xingjun Ma (xingjun.ma@unimelb.edu.au). } 

\thispagestyle{empty}

\begin{abstract}
Training accurate deep neural networks (DNNs) in the presence of noisy labels is an important and challenging task. Though a number of approaches have been proposed for learning with noisy labels, many open issues remain. In this paper, we show that DNN learning with Cross Entropy (CE) exhibits overfitting to noisy labels on some classes (``easy" classes), but more surprisingly, it also suffers from significant under learning on some other classes (``hard" classes). Intuitively, CE requires an extra term to facilitate learning of hard classes, and more importantly, this term should be noise tolerant, so as to avoid overfitting to noisy labels. Inspired by the symmetric KL-divergence, we propose the approach of \textbf{Symmetric cross entropy Learning} (SL), boosting CE symmetrically with a noise robust counterpart Reverse Cross Entropy (RCE). Our proposed SL approach simultaneously addresses both the under learning and overfitting problem of CE in the presence of noisy labels. We provide a theoretical analysis of SL and also empirically show, on a range of benchmark and real-world datasets, that SL outperforms state-of-the-art methods. We also show that SL can be easily incorporated into existing methods in order to further enhance their performance.
\end{abstract}

\section{Introduction}
Modern deep neural networks (DNNs) are often highly complex models that have hundreds of layers and millions of trainable parameters, requiring large-scale datasets with clean label annotations such as ImageNet \cite{deng2009imagenet} for proper training. However, labeling large-scale datasets is a costly and error-prone process, and even high-quality datasets are likely to contain noisy (incorrect) labels. Therefore, training accurate DNNs in the presence of noisy labels has become a task of great practical importance in deep learning.

Recently, several works have studied the dynamics of DNN learning with noisy labels. Zhang \textit{et.al} \cite{zhang2016understanding} argued that DNNs exhibit memorization effects whereby they first memorize the training data for clean labels and then subsequently memorize data for the noisy labels. Similar findings are also reported in \cite{arpit2017closer} that DNNs first learn clean and easy patterns and eventually memorize the wrongly assigned labels.  Further evidence is provided in \cite{ma2018dimensionality} that DNNs first learn simple representations via subspace dimensionality compression and then overfit to noisy labels via subspace dimensionality expansion. Different findings are reported in \cite{shwartz2017opening}, where DNNs with a specific activation function (\textit{i.e.}, tanh) undergo an initial label fitting phase then a subsequent representation compression phase where the overfitting starts. Despite these important findings, a complete understanding of DNN learning behavior, particularly their learning process for noisy labels, remains an open question.

\begin{figure}[!t]
	\centering
	\begin{subfigure}{0.49\linewidth}
		\includegraphics[width=\textwidth]{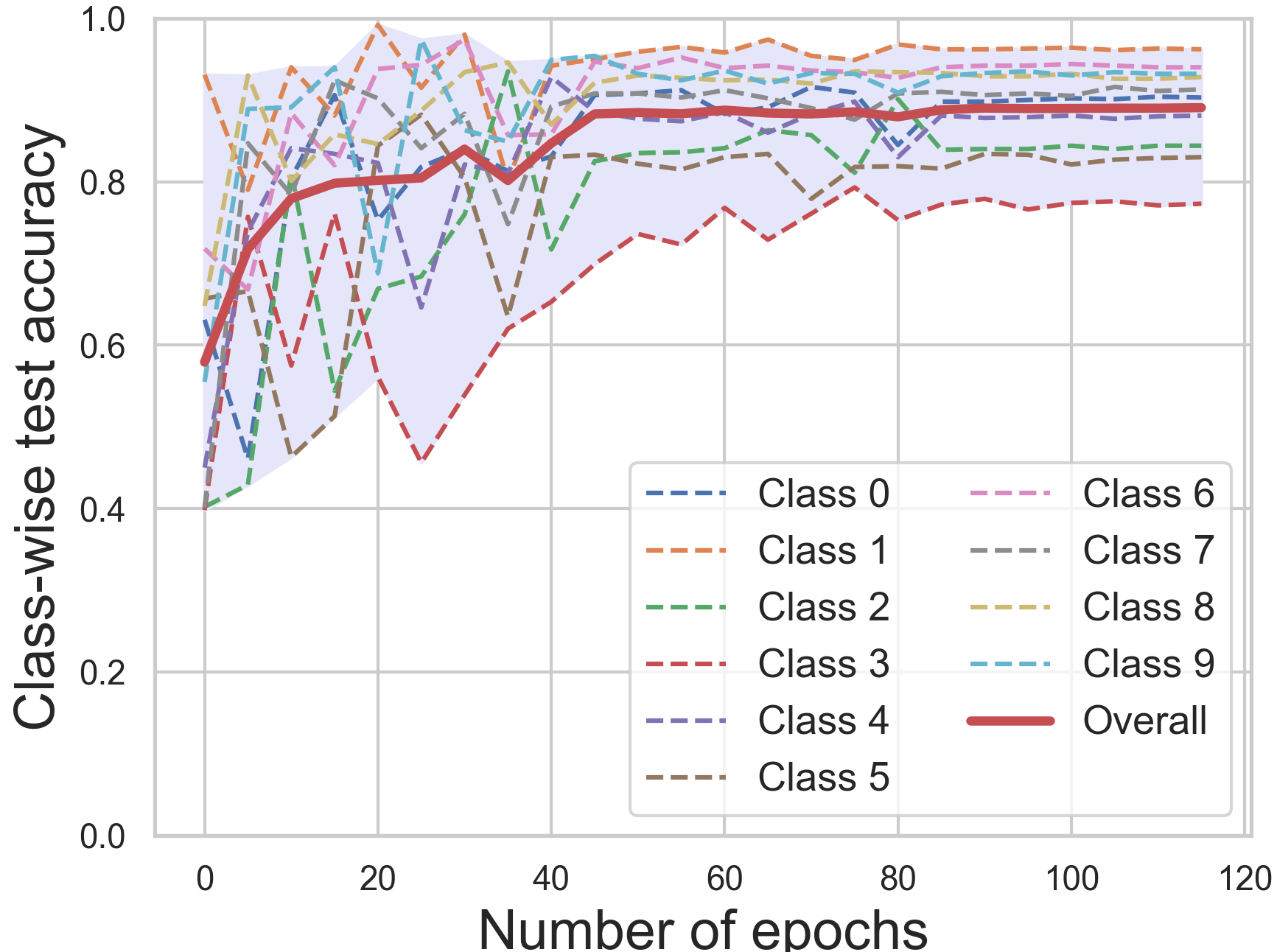}
		\caption{CE - clean}
		\label{ce_clean}
	\end{subfigure}
	\begin{subfigure}{0.49\linewidth} 
		\includegraphics[width=\textwidth]{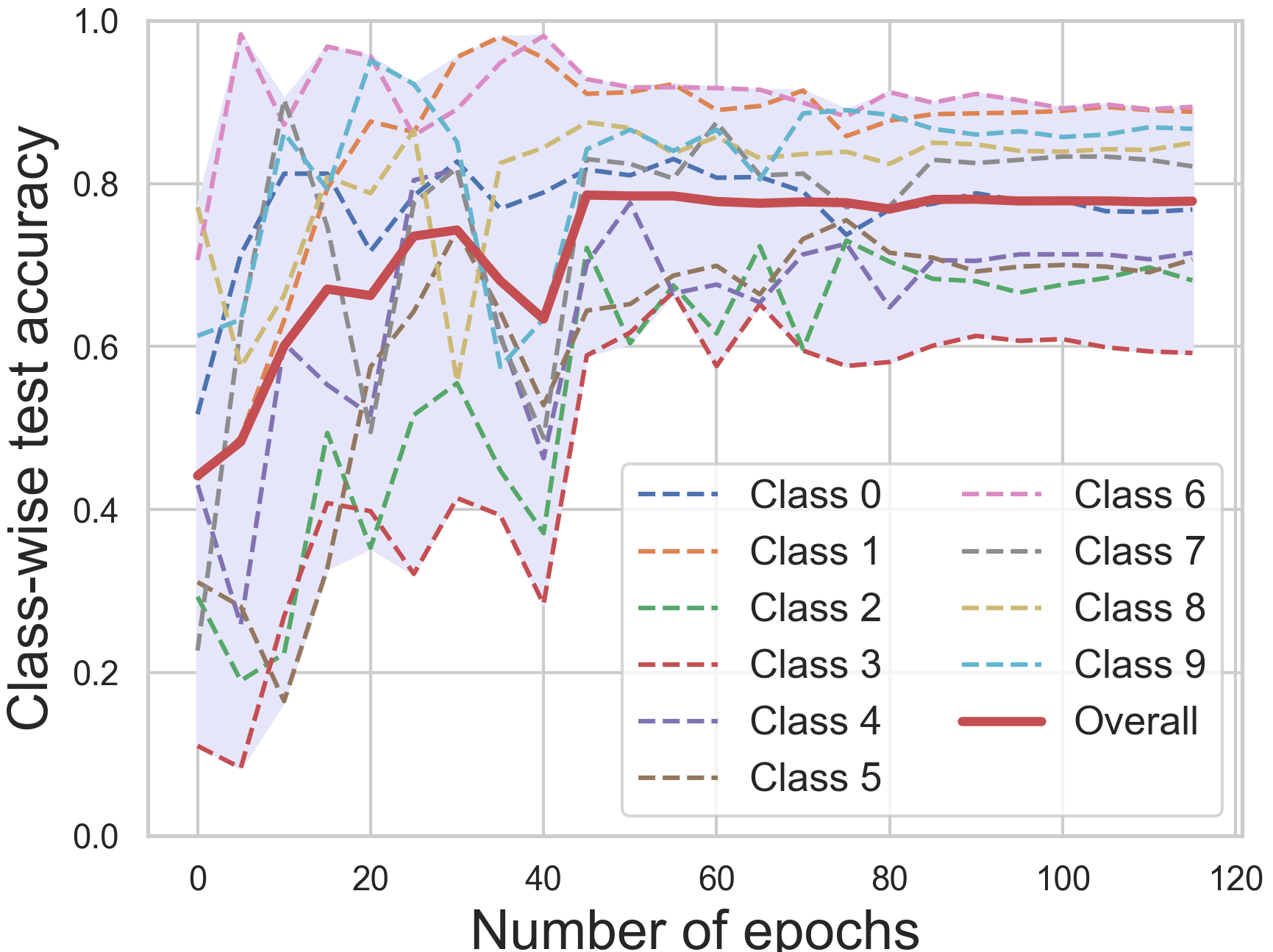}
		\caption{CE - noisy} 
		\label{ce_noisy}
	\end{subfigure}\\
	\begin{subfigure}{0.49\linewidth}
		\includegraphics[width=\textwidth]{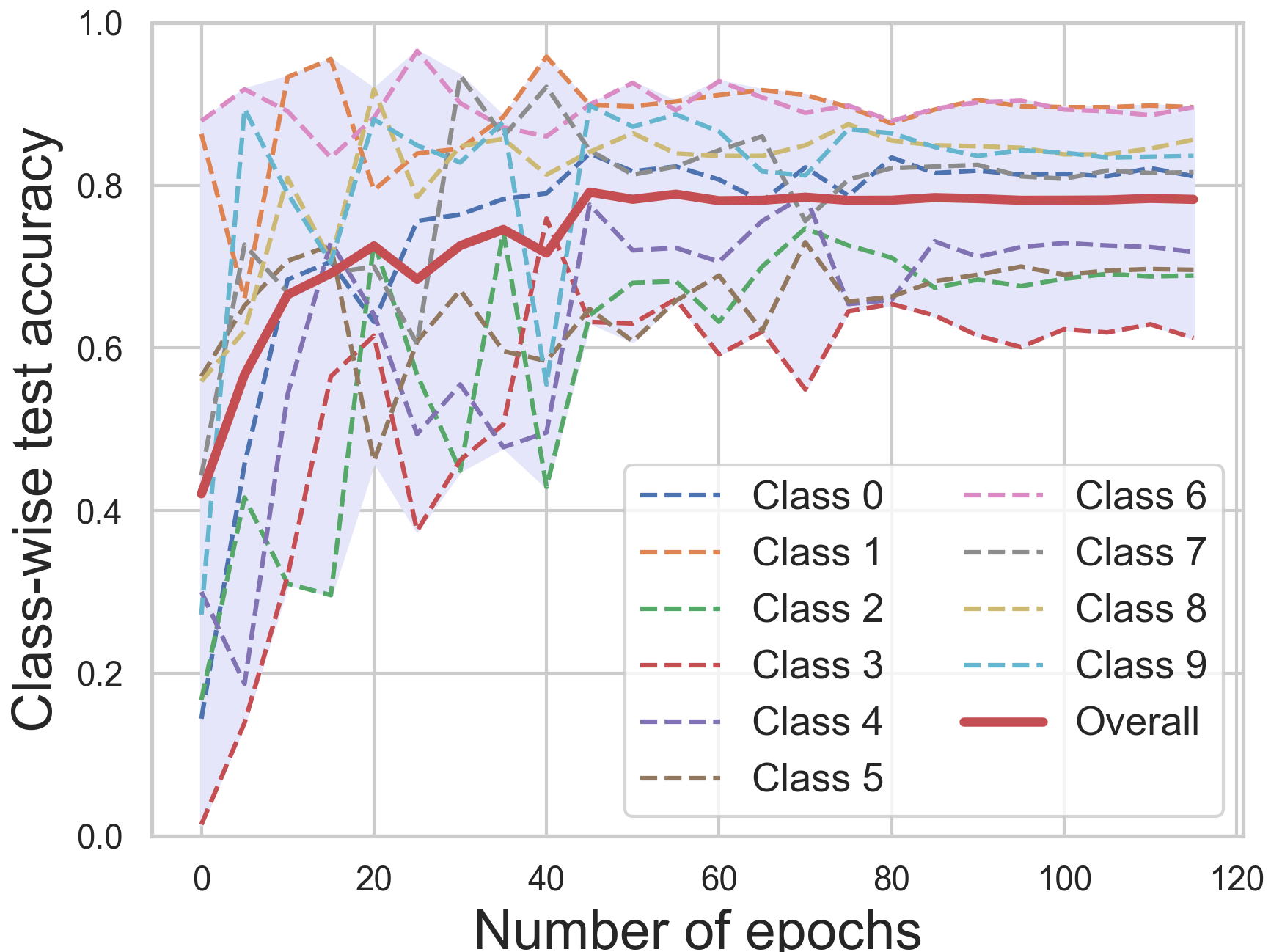}
		\caption{LSR - noisy}
		\label{lsr_noisy}
	\end{subfigure}
	\begin{subfigure}{0.49\linewidth}
		\includegraphics[width=\textwidth]{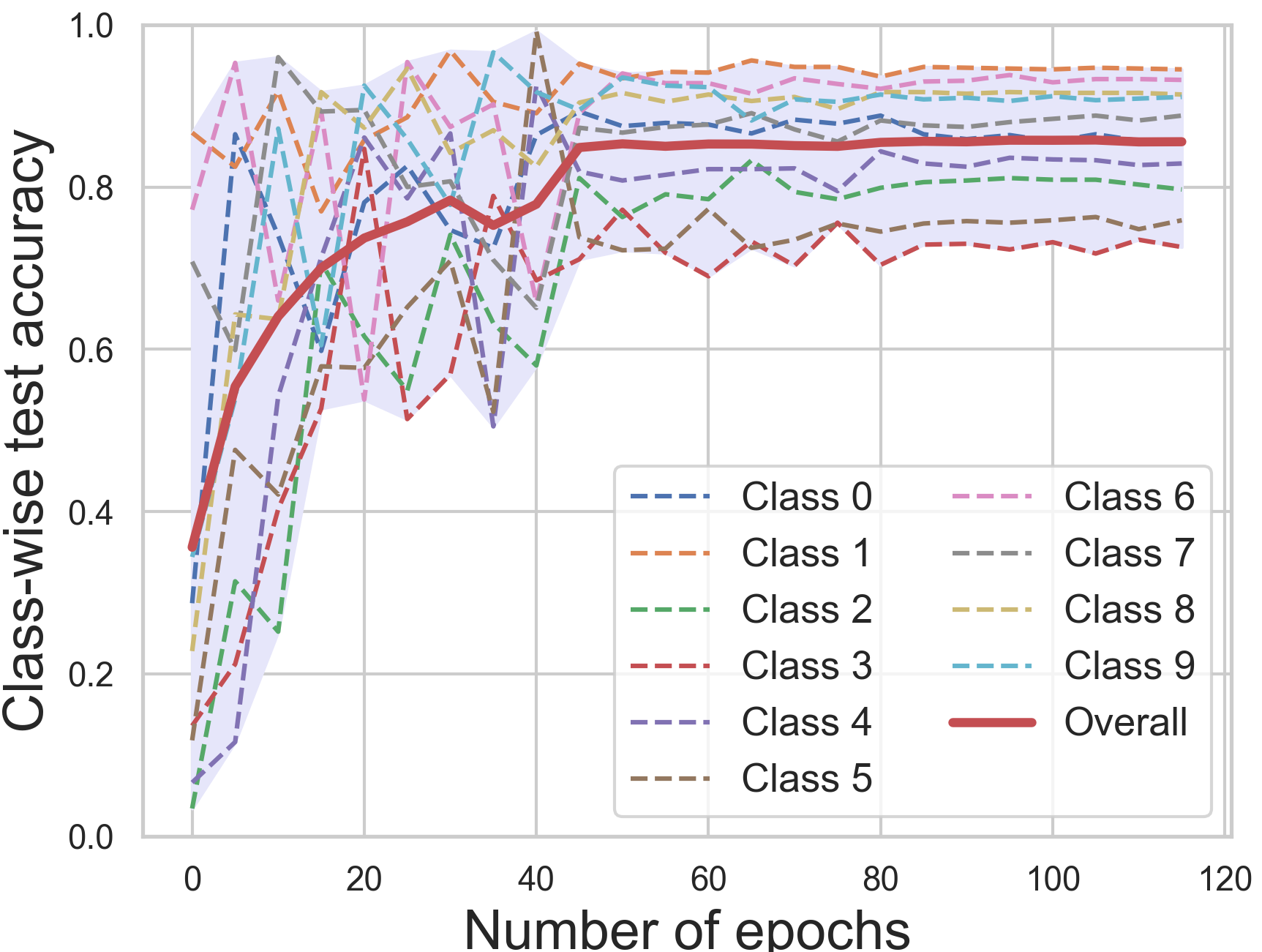}
		\caption{\textbf{SL} - noisy}
		\label{sce_noisy}
	\end{subfigure}
	\vspace{-0.1 in}
	\caption{The class-wise test accuracy of an 8-layer CNN on CIFAR-10 trained by (a) CE on clean labels with class-biased phenomenon, (b) CE on 40\% symmetric/uniform noisy labels with amplified class-biased phenomenon and under learning on hard classes (\textit{e.g.}, class 3), (c) LSR under the same setting to (b) with under learning on hard classes still existing, (d) our proposed SL under the same setting to (b) exhibiting overall improved learning on all classes.}
	\label{figure1}
	\vspace{-0.15 in}
\end{figure}

In this paper, we provide further insights into the learning procedure of DNNs by investigating the learning dynamics across classes. While Cross Entropy (CE) loss is the most commonly used loss for training DNNs, we have found that DNN learning with CE can be \textbf{\emph{class-biased}}: some classes (``easy" classes) are easy to learn and converge faster than other classes (``hard" classes). As shown in Figure~\ref{ce_clean}, even when labels are clean, the class-wise test accuracy spans a wide range during the entire training process. 
As further shown in Figure~\ref{ce_noisy}, this phenomenon is amplified when training labels are noisy: whilst easy classes (\textit{e.g.}, class 6) already overfit to noisy labels, hard classes (\textit{e.g.}, class 3) still suffer from significant \textbf{\emph{under learning}} (class accuracy significantly lower than clean label setting). Specifically, class 3 (bottom curve) only has an accuracy of ${\small\sim}60\%$ at the end, considerably less than the ${\small>}90\%$ accuracy of class 6 (top curve). Label Smoothing Regularization (LSR) \cite{szegedy2016rethinking,pereyra2017regularizing} is a widely known technique to ease overfitting issues, as shown in Figure \ref{lsr_noisy}, which still exhibits significant under learning on hard classes. Comparing the overall test accuracy (solid red curve) in Figure \ref{figure1}, a low test accuracy (under learning) on hard classes is a barrier to high overall accuracy. This is a different finding from previous belief that poor performance is simply caused by overfitting to noisy labels. We also visualize the learned representations for the noisy label case in Figure \ref{rep_ce_noisy}: some clusters are learned comparably well to those learned with clean labels (Figure \ref{rep_ce_clean}), while some other clusters do not have clear separated boundaries.

\begin{figure}[!t]
	\centering
	\begin{subfigure}{0.32\linewidth}
		\includegraphics[width=\textwidth]{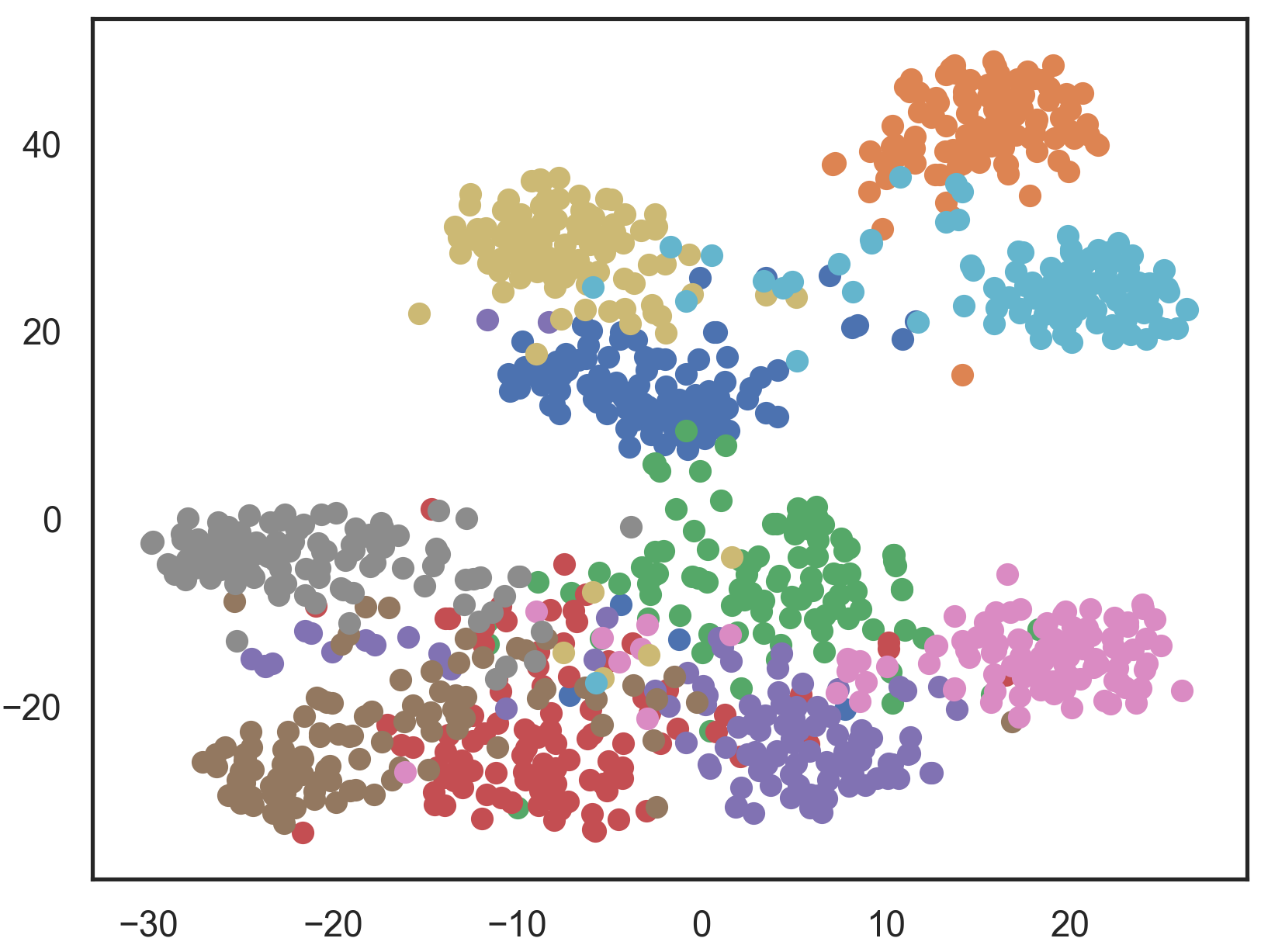}
		\caption{CE - clean}
		\label{rep_ce_clean}
	\end{subfigure}
	\begin{subfigure}{0.32\linewidth} 
		\includegraphics[width=\textwidth]{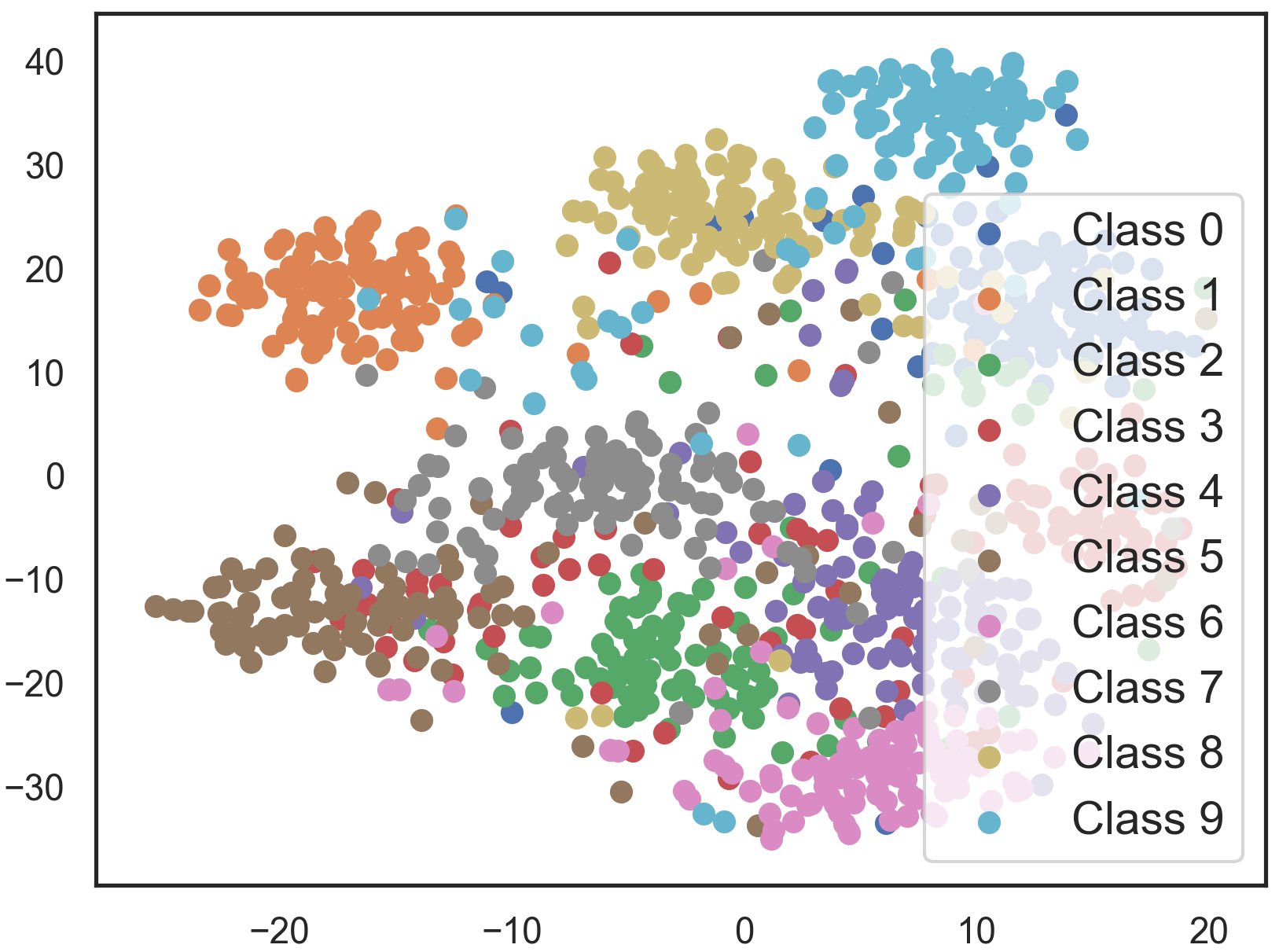}
		\caption{CE - noisy} 
		\label{rep_ce_noisy}
	\end{subfigure}
	\begin{subfigure}{0.32\linewidth}
		\includegraphics[width=\textwidth]{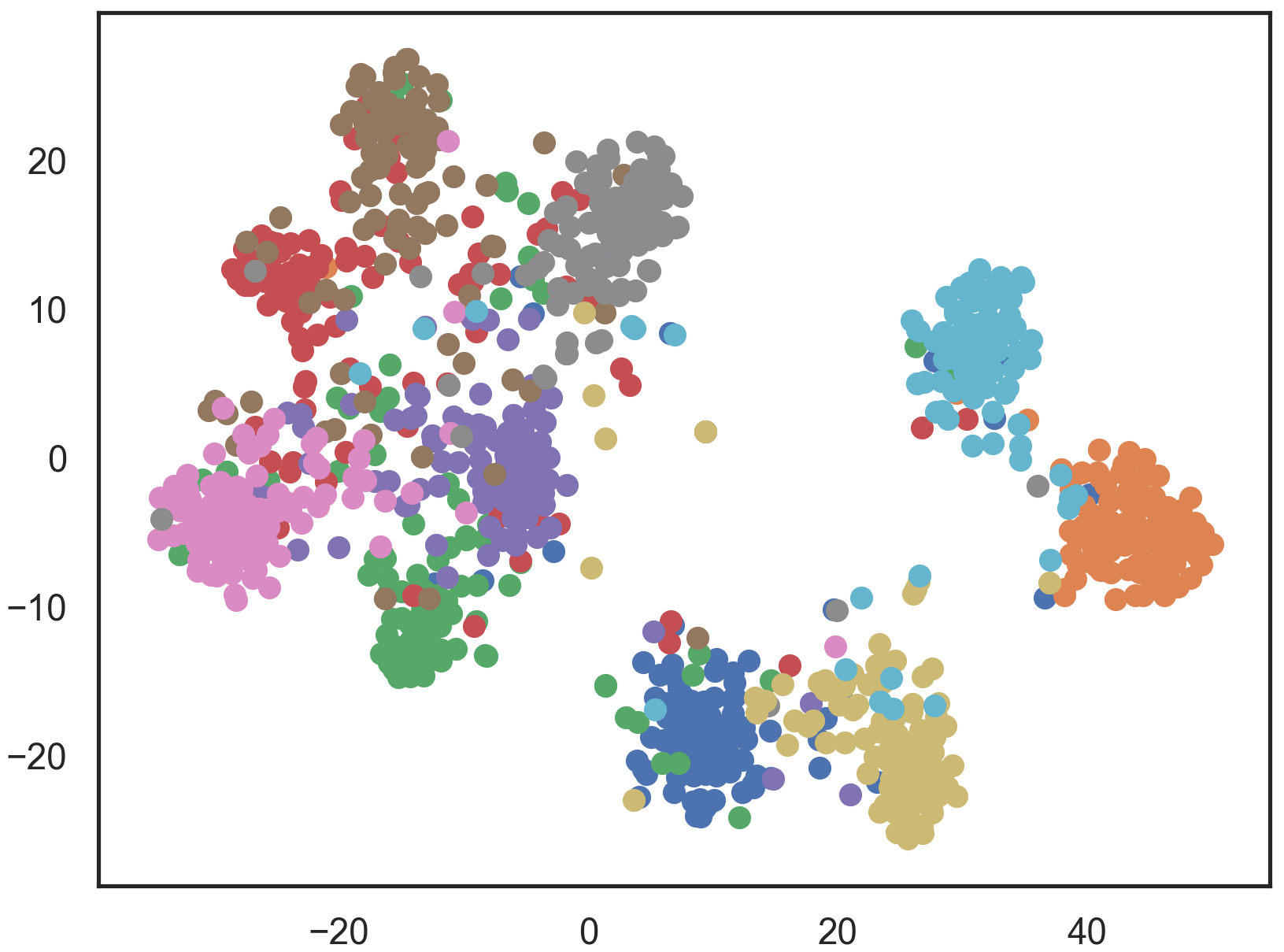}
		\caption{SL - noisy}
		\label{rep_sce_noisy}
	\end{subfigure}
	\vspace{-0.1 in}
	\caption{Visualization of learned representations on CIFAR-10 using t-SNE 2D embeddings of deep features at the last second dense layer with (a) CE on clean labels, (b) CE on 40\% symmetric noisy labels, (c) the proposed SL on the same setting to (b).}
	\label{ce_sce_rep_40}
	\vspace{-0.15 in}
\end{figure}

Intuitively, CE requires an extra term to improve its learning on hard classes, and more importantly, this term needs to be tolerant to label noise.
Inspired by the symmetric KL-divergence, we propose such a noise tolerant term, namely Reverse Cross Entropy (RCE), which combined with CE forms the basis of the approach Symmetric cross entropy Learning (SL). SL not only promotes sufficient learning (class accuracy close to clean label setting) of hard classes, but also improves the robustness of DNNs to noisy labels. As a preview of this, we can inspect the improved learning curves of class-wise test accuracy and representations in Figure~\ref{sce_noisy} and \ref{rep_sce_noisy}. Under the same 40\% noise setting, the variation of class-wise test accuracy has been narrowed by SL to 20\% with 95\% the highest and 75\% the lowest (Figure~\ref{sce_noisy}), and the learned representations are of better quality with more separated clusters (Figure~\ref{rep_sce_noisy}), both of which are very close to the clean settings. 

Compared to existing approaches that often involve architectural or non-trivial algorithmic modifications, SL is extremely simple to use. It requires minimal intervention to the training process and thus can be straightforwardly incorporated into existing models to further enhance their performance. In summary, our main contributions are:
\begin{itemize}
  \item  We provide insights into the class-biased learning procedure of DNNs with CE loss and find that the under learning problem of hard classes is a key bottleneck for learning with noisy labels.
  
  \item We propose a Symmetric Learning (SL) approach, to simultaneously address the hard class under learning problem and the noisy label overfitting problem of CE. We provide both theoretical analysis and empirical understanding of SL.
  
  \item We empirically demonstrate that SL can achieve better robustness than state-of-the-art methods, and can be also easily incorporated into existing methods to significantly improve their performance.
\end{itemize}

\section{Related Work}\label{sec:related_work}
Different approaches have been proposed to train accurate DNNs with noisy labels, and they can be roughly divided into three categories: 1) label correction methods, 2) loss correction methods, and 3) refined training strategies.

The idea of label correction is to improve the quality of the raw labels. One common approach is to correct noisy labels to their true labels via a clean label inference step using complex noise models characterized by directed graphical models \cite{xiao2015learning}, conditional random fields \cite{vahdat2017toward}, neural networks \cite{lee2017cleannet,veit2017learning} or knowledge graphs \cite{li2017learning}. These methods require the support from extra clean data or an expensive detection process to estimate the noise model.

Loss correction methods modify the loss function for robustness to noisy labels. One approach is to model the noise transition matrix that defines the probability of one class changed to another class \cite{han2018masking}. Backward \cite{patrini2017making} and Forward \cite{patrini2017making} are two such correction methods that use the noise transition matrix to modify the loss function. However, the ground-truth noise transition matrix is not always available in practice, and it is also difficult to obtain accurate estimation \cite{han2018masking}. Work in \cite{goldberger2016training,sukhbaatar2014training} augments the correction architecture by adding a linear layer on top of the neural network. Bootstrap \cite{reed2014training}  uses a combination of raw labels and their predicted labels. There is also research that defines noise robust loss functions, such as Mean Absolute Error (MAE) \cite{ghosh2017robust}, but a challenge is that training a network with MAE is slow due to gradient saturation.    Generalized Cross Entropy (GCE) loss \cite{zhang2018generalized} applies a Box-Cox transformation to probabilities (power law function of probability with exponent $q$) and can behave like a weighted MAE. Label Smoothing Regularization (LSR) \cite{szegedy2016rethinking,pereyra2017regularizing} is another technique using soft labels in place of one-hot labels to alleviate overfitting to noisy labels.

Refined training strategies design new learning paradigms for noisy labels. MentorNet \cite{jiang2018mentornet,yu2019does} supervises the training of a StudentNet by a learned sample weighting scheme in favor of probably correct labels. Decoupling training strategy \cite{malach2017decoupling} trains two networks simultaneously, and parameters are updated when their predictions disagree. Co-teaching \cite{han2018co} maintains two networks simultaneously during training, with one network learning from the other network's most confident samples. These studies all require training of an auxiliary network for sample weighting or learning supervision. D2L \cite{ma2018dimensionality} uses subspace dimensionality adapted labels for learning, paired with a training process monitor. The iterative learning framework~\cite{wang2018iterative} iteratively detects and isolates noisy samples during the learning process. The joint optimization framework~\cite{tanaka2018joint} updates DNN parameters and labels alternately. These methods either rely on complex interventions into the learning process, which may be challenging to adapt and tune, or are sensitive to hyperparameters like the number of training epochs and learning rate.

\section{Weakness of Cross Entropy}\label{sec:understanding}
We begin by analyzing the Cross Entropy (CE) and its limitations for learning with noisy labels.

\subsection{Preliminaries}
Given a $K$-class dataset $\mathcal{D} = \{(\xx, y)^{(i)}\}_{i=1}^n$, with $\xx \in \X \subset \mathbb{R}^d$ denoting a sample in the $d$-dimensional input space and $y \in \Y = \{1, \cdots, K\}$ its associated label. For each sample $\xx$, a classifier $f(\xx)$ computes its probability of each label $k \in \{1, \cdots, K\}$: $p(k|\xx) = \frac{e^{z_{k}}}{\sum_{j=1}^K e^{z_{j}}}$, where $z_j$ are the logits. We denote the ground-truth distribution over labels for sample $\xx$ by $q(k|\xx)$, and $\sum_{k=1}^{K}q(k|\xx)=1$. Consider the case of a single ground-truth label $y$, then $q(y|\xx)=1$ and $q(k|\xx)=0$ for all $k \neq y$. The cross entropy loss for sample $\xx$ is:
\vspace{-0.1 in}
\begin{equation}\label{eq:ce}
\begin{split}
    \ell_{ce} = -\sum_{k=1}^{K} q(k|\xx) \log p(k|\xx).
\end{split}
\end{equation}

\subsection{Weakness of CE under Noisy Labels}\label{drawback_ce}
We now highlight some weaknesses of CE for DNN learning with noisy labels, based on empirical evidence on CIFAR-10 dataset \cite{krizhevsky2009learning} (10 classes of natural images). To generate noisy labels, we randomly flip a correct label to one of the other 9 incorrect labels uniformly (\textit{e.g.}, symmetric label noise), and refer to the portion of incorrect labels as the noise rate. The network used here is an 8-layer convolutional neural network (CNN). Detailed experimental settings can be found in Section~\ref{understanding_sce}.

We first explore in more detail the class-biased phenomenon shown in Figure \ref{ce_clean} and \ref{ce_noisy}, focusing on three distinct learning stages: early (the $10$-th epoch), middle (the $50$-th epoch) and later (the $100$-th epoch) stages, with respect to total 120 epochs of training. 
As illustrated in Figure \ref{fig:biased_class_test_acc}, CE learning starts in a highly class-biased manner (the blue curves) for both clean labels and 40\% noisy labels. 
This is because patterns inside of samples are intrinsically different. For clean labels, the network eventually manages to learn all classes uniformly well, reflected by the relatively flat accuracy curve across classes (the green curve in Figure \ref{acc_ce_clean}). However, for noisy labels, the class-wise test accuracy varies significantly across different classes, even at the later stage (the green curve in Figure \ref{acc_ce_noisy}). In particular, the network struggles to learn hard classes (\textit{e.g.}, class $2/3$) with up to a 20\% gap to the clean setting, whereas some easy classes (\textit{e.g.}, class $1/6$) are better learned and already start overfitting to noisy labels (accuracy drops from epoch 50 to 100). It appears that the under learning of hard classes is a major cause for the overall performance degradation, due to the fact that the accuracy drop caused by overfitting is relatively small.

\begin{figure}[!t]
	\centering
	\begin{subfigure}{0.49\linewidth}
		\includegraphics[width=\textwidth]{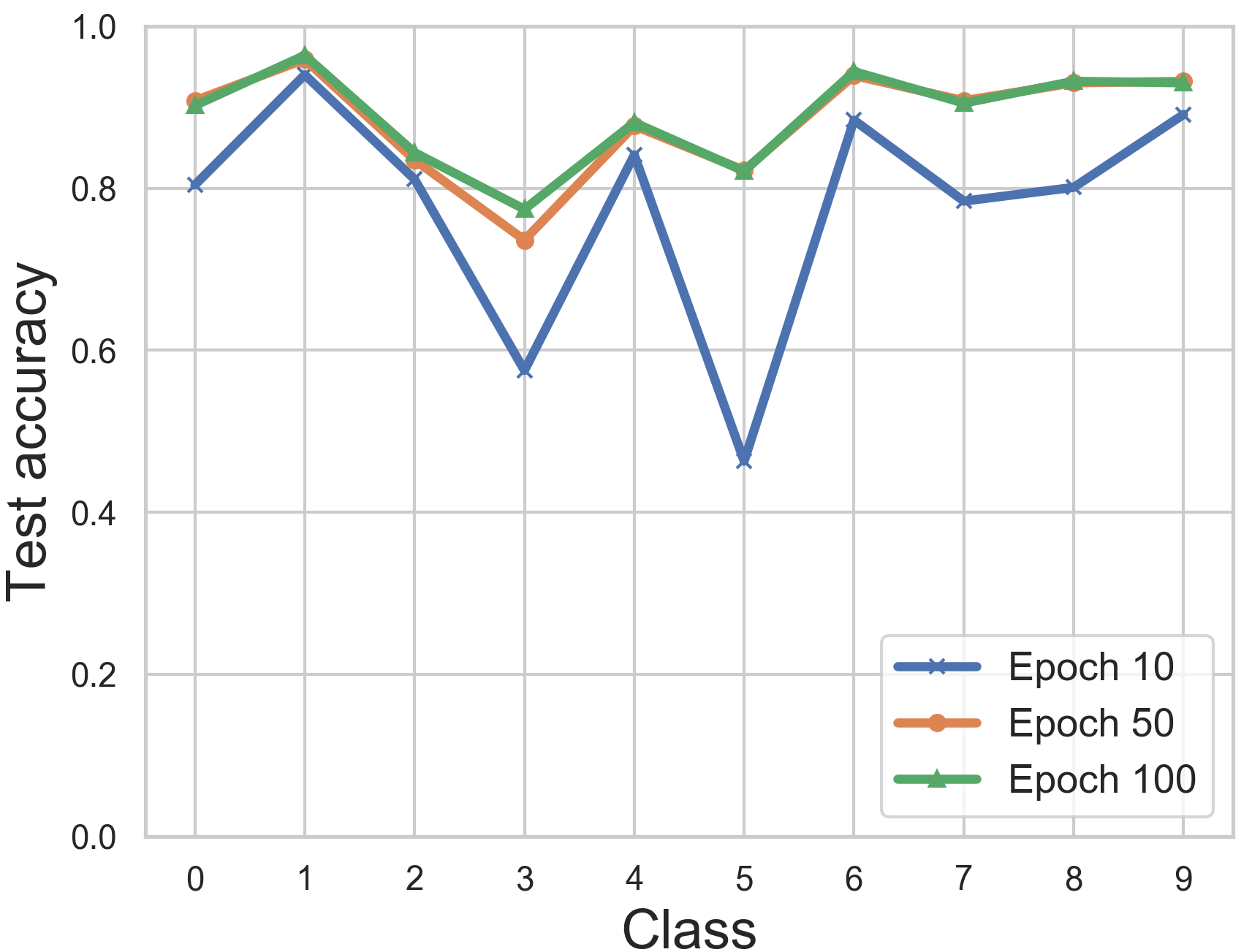}
		\caption{CE - clean}
		\label{acc_ce_clean}
	\end{subfigure}
	\begin{subfigure}{0.49\linewidth} 
		\includegraphics[width=\textwidth]{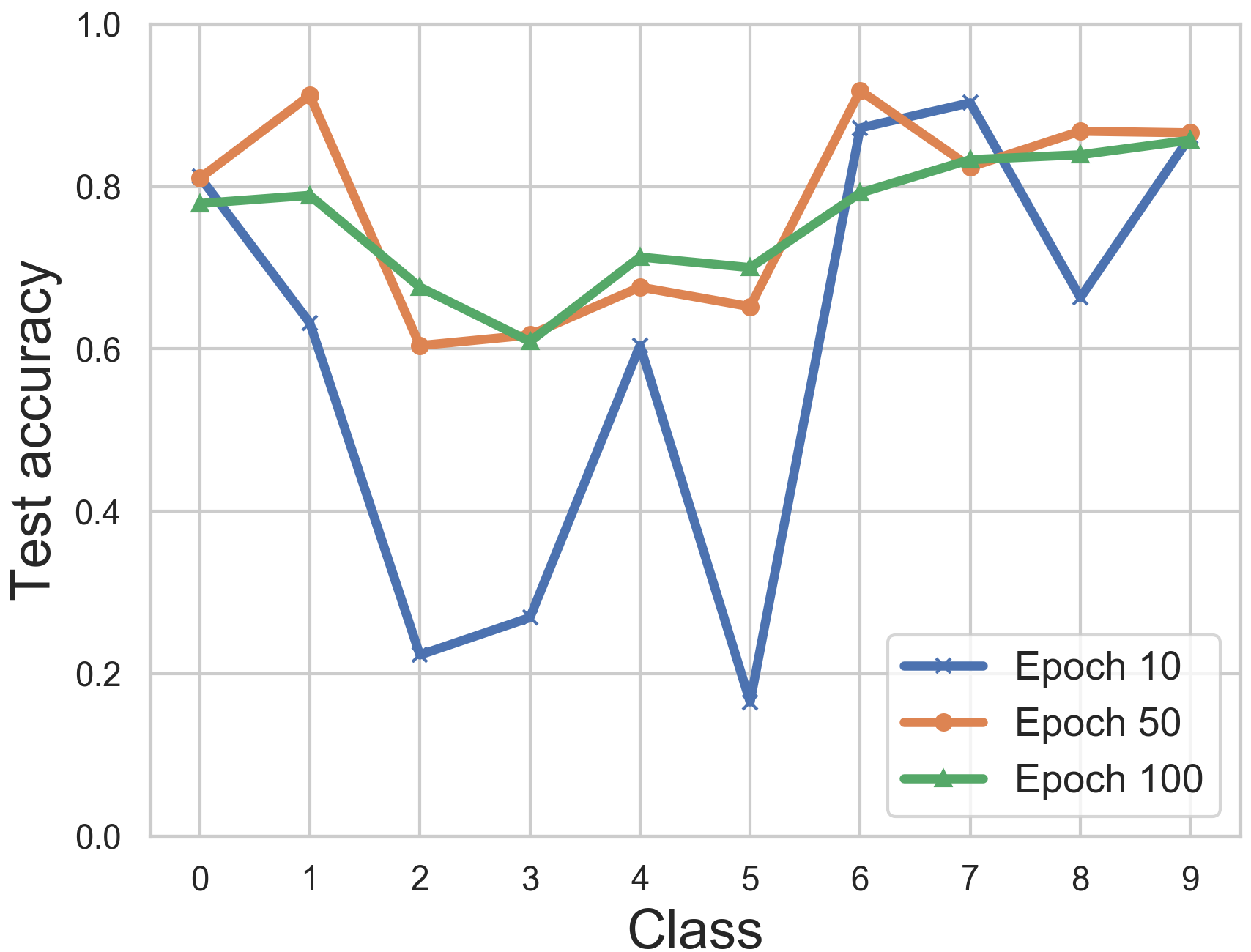}
		\caption{CE - noisy} 
		\label{acc_ce_noisy}
	\end{subfigure}
	\vspace{-0.1 in}
	\caption{The class-wise test accuracy at epoch 10, 50 and 100 (120 epochs in total) trained by CE loss on CIFAR-10 with (a) clean labels or (b) 40\% symmetric noisy labels.}
	\label{fig:biased_class_test_acc}
	\vspace{-0.1 in}
\end{figure}

\begin{figure}[!t]
	\centering
	\begin{subfigure}{0.49\linewidth}
		\includegraphics[width=\textwidth]{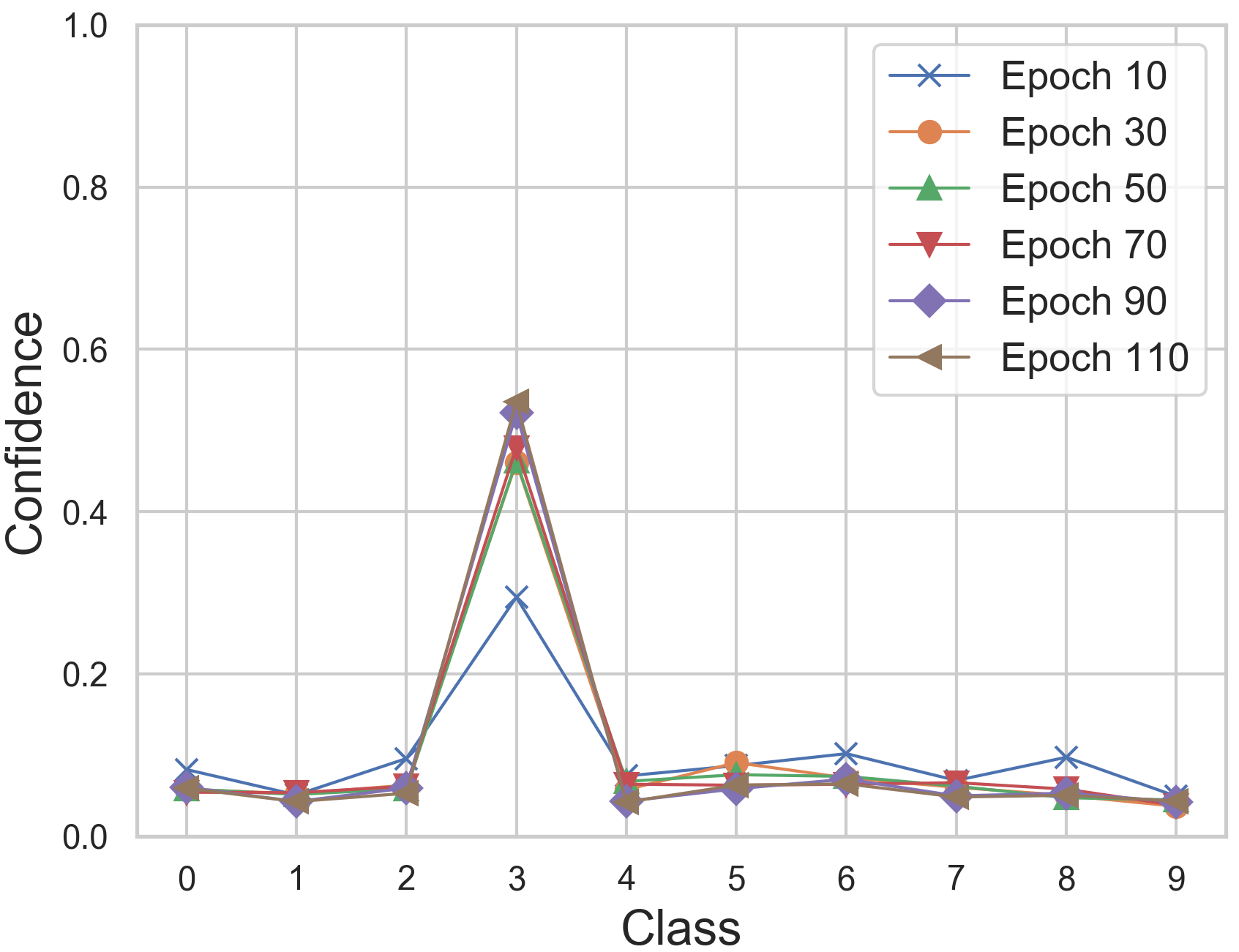}
		\caption{Prediction confidence}
		\label{ce_confidence}
	\end{subfigure}
	\begin{subfigure}{0.49\linewidth} 
		\includegraphics[width=\textwidth]{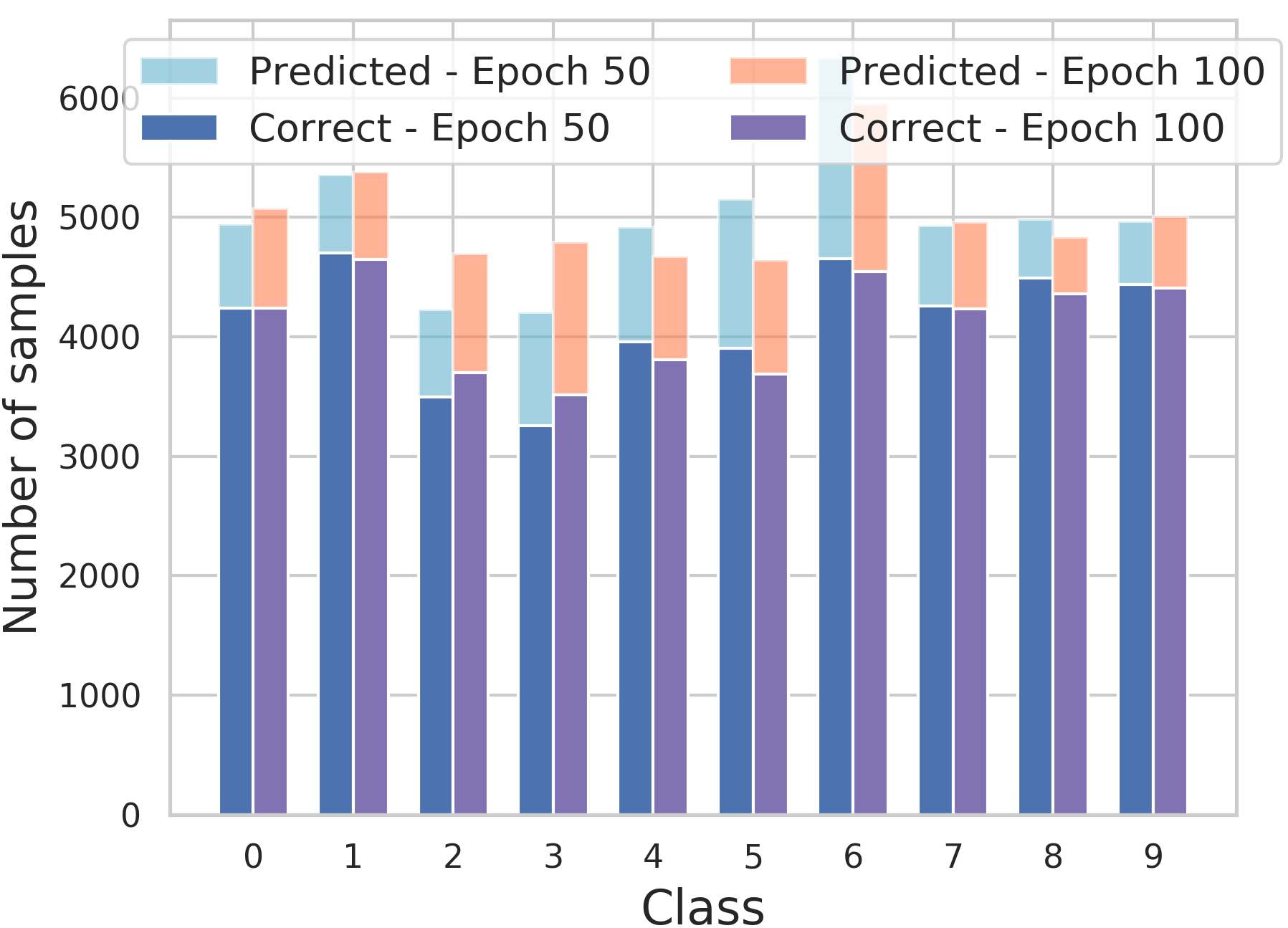}
		\caption{Prediction distribution} 
		\label{pred_imbalance}
	\end{subfigure}
	\vspace{-0.1 in}
	\caption{Intermediate results of CE loss on CIFAR-10 with 40\% symmetric noisy labels. (a) Average confidence of the \textbf{clean portion} of class 3 samples. (b) The true positive samples (\emph{correct}) out of predictions (\emph{predicted}) for each class.}
	\label{inner_ce}
	\vspace{-0.15 in}
\end{figure}

We further investigate the reason behind the under learning of CE on hard classes from the perspective of representations. Due to their high similarities in representations to some other classes (see the red cluster for class 3 in Figure \ref{rep_ce_clean}), the predictions for hard class examples are likely to assign a relatively large probability to those similar classes. 
Under the noisy label scenario, class 3 has become even more scattered into other classes (red cluster in Figure \ref{rep_ce_noisy}). As a consequence, no visible cluster was learned by CE, even though there are still 60\% correct labels in this scenario. Further delving into these 60\% \textbf{clean portion} of class 3 samples, we show, in Figure \ref{ce_confidence}, their prediction confidence output of the neural network. Although the confidence at class 3 is the highest, it is only around 0.5, while for the other classes, the confidence is around 0.05 or 0.1 which is actually a relatively high value and an indication of insufficient learning of class 3 even on the clean labeled part.
Another evidence of under learning can be obtained from Figure \ref{pred_imbalance}, where hard classes (\textit{e.g.}, class $2/3$) have fewer true positive samples throughout intermediate stages of learning.

Clearly, CE by itself is not sufficient for learning of hard classes, especially under the noisy label scenario. We note that this finding sheds new insights into DNN learning behavior under label noise, and differs from previous belief that DNNs overfit to all classes in general \cite{arpit2017closer,zhang2016understanding}. In the next section, we propose a symmetric learning approach that can address both the hard class under learning and noisy label overfitting problems of CE.

\section{Symmetric Cross Entropy Learning}\label{sec:sce}
In this section, we propose Symmetric cross entropy Learning (SL), an approach that strikes a balance between sufficient learning and robustness to noisy labels.  We also provide theoretical analysis about the formulation and behavior of SL.

\subsection{Definition}
Given two distributions $q$ and $p$, the relation between the cross entropy (denoted as $H(q, p)$) and the KL-divergence (denoted as $KL(q\| p)$) is:
\begin{equation}
\label{eq:ce_derivative}
KL(q \| p) = H(q, p) - H(q),
\end{equation}
where $H(q)$ is the entropy of $q$. In the context of classification, $q=q(k|\xx)$ is the ground truth class distribution conditioned on sample $\xx$, whilst $p=p(k|\xx)$ is the predicted distribution over labels by the classifier $f$. From the perspective of KL-divergence, classification is to learn a prediction distribution $p(k|\xx)$ that is close to the ground truth distribution $q(k|\xx)$, which is to minimize the KL-divergence $KL(q\| p)$ between the two distributions\footnote{In practice, the $H(q(k|\xx))$ term is a constant for a given class distribution and therefore omitted from Eq.~\eqref{eq:ce_derivative} giving the CE loss in Eq.~\eqref{eq:ce}.}.

In information theory, given a true distribution $q$ and its approximation $p$, $KL(q\|p)$ measures the penalty on encoding samples from $q$ using code optimized for $p$ (penalty in the number of extra bits required). In the context of noisy labels, we know that $q(k|\xx)$ does not represent the true class distribution, instead $p(k|\xx)$ can reflect the true distribution to a certain extent. Thus, in addition to taking $q(k|\xx)$ as the ground truth, we also need to consider the other direction of KL-divergence, that is $KL(p||q)$, to punish coding samples that come from $p(k|\xx)$ when using a code for $q(k|\xx)$. The symmetric KL-divergence is:
\begin{equation}\label{kl_sym}
SKL = KL(q||p) + KL(p||q).
\end{equation}

Transferring this symmetric idea from KL-divergence to cross entropy gives us the \textit{Symmetric Cross Entropy} (SCE):
\begin{equation}\label{ce_sym}
SCE = CE + RCE = H(q, p) + H(p, q),
\end{equation}
where $RCE=H(p, q)$ is the reverse version of $H(q, p)$, namely, \emph{Reverse Cross Entropy}. The RCE loss for a sample $\xx$ is:
\begin{equation}\label{eq:rce}
    \ell_{rce} = -\sum_{k=1}^{K} p(k|\xx) \log q(k|\xx).
\end{equation}

The sample-wise SCE loss can then be defined as:
\begin{equation}\label{eq:sce}
  \ell_{sce} = \ell_{ce} + \ell_{rce}.
\end{equation}

While the RCE term is noise tolerant as will be proved in Section \ref{sec:theor}, the CE term is not robust to label noise \cite{ghosh2017robust}. However, CE is useful for achieving good convergence \cite{zhang2018generalized}, which will be verified empirically in Section \ref{sec:experiments}. 
Towards more effective and robust learning, we propose a flexible symmetric learning framework with the use of two decoupled hyperparameters (\textit{e.g.}, $\alpha$ and $\beta$), with $\alpha$ on the overfitting issue of CE while $\beta$ for flexible exploration on the robustness of RCE. Formally, the SL loss is:
\begin{equation}\label{eq:sce_alpha}
   \ell_{sl} = \alpha \ell_{ce} + \beta \ell_{rce}.
\end{equation}

As the ground truth distribution $q(k|\xx)$ is now inside of the logarithm in $\ell_{rce}$, this could cause computational problem when labels are one-hot: zero values inside the logarithm. To solve this issue, we define $\log 0 = A$ (where $A<0$ is some constant), which shortly will be proved useful for the robustness of $\ell_{rce}$ in Theorem \ref{theorem_1}.
This technique is similar to the clipping operation implemented by most deep learning frameworks. Compared with another option label smoothing technique, our approach introduces less bias into the model (negligible bias (in the view of training) at finite number of points like $q(k|x)=0$ but no bias at $q(k|x)=1$). Note that, the effect of $\beta$ on RCE can be reflected by different settings of $A$ (refer to Eq. \eqref{MAE-RCE}).

\subsection{Theoretical Analysis}\label{sec:theor}
\noindent\textbf{Robustness analysis:}
In the following, we will prove that the RCE loss $\ell_{rce}$ is robust to label noise following \cite{ghosh2017robust}. We denote the noisy label of $\xx$ as $\hat{y}$, in contrast to its true label $y$. Given any classifier $f$ and loss function $\ell_{rce}$, we define the $risk$ of $f$ under clean labels as $R(f)=\E_{\xx,y}\ell_{rce}$, and the $risk$ under label noise rate $\eta$ as $R^{\eta}(f)=\E_{\xx, \hat{y}} \ell_{rce}$. Let $f^*$ and $f_{\eta}^*$ be the global minimizers of $R(f)$ and $R^{\eta}(f)$ respectively. Risk minimization under a given loss function is noise robust if $f_{\eta}^*$ has the same probability of misclassification as that of $f^*$ on noise free data. When the above is satisfied we also say that the loss function is noise-tolerant.

\begin{theorem}\label{theorem_1}
In a multi-class classification problem, $\ell_{rce}$ is noise tolerant under symmetric or uniform label noise if noise rate $\eta < 1 -\frac{1}{K}$. And, if $R(f^*) = 0$, $\ell_{rce}$ is also noise tolerant under asymmetric or class-dependent label noise when noise rate $\eta_{y k} < 1-\eta_{y}$ with $\sum_{k \neq y}\eta_{y k}=\eta_{y}$.
\end{theorem}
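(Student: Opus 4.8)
The plan is to follow the symmetric-loss framework of \cite{ghosh2017robust}, whose engine is a single algebraic identity: if a loss obeys $\sum_{k=1}^{K}\ell(f(\xx),k)=C$ for a constant $C$ independent of $f$ and $\xx$, then minimizing its noisy risk recovers the clean minimizer. So the first step is to cast $\ell_{rce}$ into this form. For a one-hot ground truth, the conventions $\log 1 = 0$ and $\log 0 = A$ collapse Eq.~\eqref{eq:rce}, evaluated against a candidate label $k$, to $\ell_{rce}(f(\xx),k)=-A\,(1-p(k|\xx))$. Summing over $k$ and using $\sum_{k}p(k|\xx)=1$ then gives $\sum_{k=1}^{K}\ell_{rce}(f(\xx),k)=-A(K-1)=:C$, the constant-sum (``symmetric'') property. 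This is exactly where the $\log 0 = A$ definition earns its keep, and it is the fact on which everything else rests.

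For the symmetric-noise case I would fix $\xx$ with true label $y$ and take the conditional expectation over $\hat y$, which equals $y$ with probability $1-\eta$ and each wrong label with probability $\eta/(K-1)$. Writing $\sum_{k\neq y}\ell_{rce}(f(\xx),k)=C-\ell_{rce}(f(\xx),y)$ and taking the outer expectation yields an affine relation
\begin{equation}
R^{\eta}(f)=\Big(1-\tfrac{\eta K}{K-1}\Big)R(f)+\tfrac{\eta C}{K-1}.
\end{equation}
Because the slope $1-\tfrac{\eta K}{K-1}$ is strictly positive precisely when $\eta<1-\tfrac{1}{K}$, and the intercept does not depend on $f$, the maps $R$ and $R^{\eta}$ share the same argmin; hence $f_{\eta}^{*}$ and $f^{*}$ coincide as minimizers and have identical misclassification probability, settling this half.

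The asymmetric case is where I expect the real obstacle. Here the per-pair flip rates $\eta_{yk}$ destroy the clean factorization of the symmetric case, so the affine shortcut is unavailable and the hypothesis $R(f^{*})=0$ must be invoked. Since $\ell_{rce}\ge 0$, $R(f^{*})=0$ forces $p^{*}(y|\xx)=1$ almost everywhere, so $f^{*}$ is error-free. I would then compute the risk gap directly: substituting $\ell_{rce}(f(\xx),k)=-A(1-p(k|\xx))$ and using $1-p(y|\xx)=\sum_{k\neq y}p(k|\xx)$ together with $\sum_{k\neq y}\eta_{yk}=\eta_{y}$, the gap telescopes to
\begin{equation}
R^{\eta}(f)-R^{\eta}(f^{*})=-A\,\E_{\xx,y}\sum_{k\neq y}\big((1-\eta_{y})-\eta_{yk}\big)\,p(k|\xx).
\end{equation}
With $A<0$ and the hypothesis $\eta_{yk}<1-\eta_{y}$, every summand is nonnegative, so $R^{\eta}(f)\ge R^{\eta}(f^{*})$ for all $f$; thus $f^{*}$ globally minimizes $R^{\eta}$.

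Finally, to convert ``$f^{*}$ minimizes $R^{\eta}$'' into the stated noise-tolerance of $f_{\eta}^{*}$, I would note that $f_{\eta}^{*}$ is also a minimizer of $R^{\eta}$, so the same nonnegative expansion evaluated at $f_{\eta}^{*}$ must vanish term by term, forcing $p_{\eta}^{*}(k|\xx)=0$ for every $k\neq y$. Hence $f_{\eta}^{*}$ is error-free as well and matches $f^{*}$ on noise-free data, completing the argument. The delicate points to watch are the $\ell_{rce}\ge 0$ sign bookkeeping (it hinges on $A<0$), the a.e.\ consequence drawn from $R(f^{*})=0$, and verifying that the constant-sum identity is genuinely independent of $f$ so that the intercept in the symmetric case can be discarded.
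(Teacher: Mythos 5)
Your proposal is correct and follows essentially the same route as the paper's proof: the constant-sum identity $\sum_{k=1}^{K}\ell_{rce}(f(\xx),k)=-(K-1)A$ from the $\log 0 = A$ convention, the affine relation between $R^{\eta}$ and $R$ with positive slope when $\eta < 1-\frac{1}{K}$ for symmetric noise, and, for the asymmetric case, the positively-weighted risk gap under $R(f^*)=0$ that must vanish term by term, forcing $p(k|\xx)=0$ for $k\neq y$ at the noisy minimizer. The only cosmetic difference is that you substitute the closed form $\ell_{rce}(f(\xx),k)=-A\left(1-p(k|\xx)\right)$ up front and compute the gap directly, whereas the paper keeps the loss symbolic and derives the same inequality from $R^{\eta}(f^{*}_{\eta})-R^{\eta}(f^{*})\leq 0$.
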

\begin{proof}
For symmetric noise:
\begin{align*}
	\small
	R^\eta(f) & =  \E_{\xx, \hat{y}} \ell_{rce} =  \E_{\xx} \E_{y | \xx} \E_{\hat{y} | \xx, y} \ell_{rce} \\
		&= \E_{\xx} \E_{y | \xx} \Big[ (1-\eta) \ell_{rce} + \frac{\eta}{K-1} \sum_{k\neq y}^{K} \ell_{rce} \Big] \\
		& =  (1 - \eta) R(f) +  \frac{\eta}{K-1} (\sum_{k=1}^{K}\ell_{rce} - R(f))\\
		& = R(f)\left(1-\frac{\eta K}{K-1}\right) - A \eta,
\end{align*}
where the last equality holds due to $\sum_{k=1}^{K}\ell_{rce} = - (K-1)A$ following Eq. \eqref{eq:rce} and definition of $\log 0 = A$. Thus, 
	\[R^\eta(f^*)-R^\eta(f)=(1-\frac{\eta K}{K-1})(R(f^*)-R(f)) \leq 0\]
because $\eta < 1 - \frac{1}{K}$ and $f^*$ is a global minimizer of $R(f)$. This proves $f^*$ is also the global minimizer of risk $R^\eta(f)$, that is, $\ell_{rce}$ is noise tolerate. 
	
Similarly, we can prove the case for asymmetric noise, please refer Appendix \ref{appendix_proof} for details.
\end{proof}

\noindent\textbf{Gradient analysis:}
We next derive the gradients of a simplified SL with $\alpha, \beta=1$ to give a rough idea of how its learning process differs from that of CE\footnote{Complete derivations can be found in the Appendix \ref{appendix_gradient}.}. For brevity, we denote $p_k$, $q_k$ as abbreviations for $p(k|\xx)$ and $q(k|\xx)$. Consider the case of a single true label, the gradient of the sample-wise RCE loss with respect to the logits $z_j$ can be derived as:
\begin{equation}
        \frac{\partial \ell_{rce}}{\partial z_j}  =
        - \sum_{k=1}^K \frac{\partial p_k}{\partial z_j} \log q_k,
\end{equation}
where $\frac{\partial p_k}{\partial z_j}$ can be further derived based on whether $k=j$:
\begin{equation}\label{eq:partial}
    \frac{\partial p_k}{\partial z_j}  = 
    \begin{cases}
    p_k(1-p_k), & k = j \\
    -p_j p_k, & k \neq j.
    \end{cases}
\end{equation}
According to Eq. \eqref{eq:partial} and the ground-truth distribution for the case of one single label (\textit{e.g.}, $q_y = 1$, and $q_k = 0$ for $k \neq y$), the gradients of SL can be derived as:
\begin{equation}
    \frac{\partial \ell_{sl}}{\partial z_j} = 
    \begin{cases}
    \frac{\partial \ell_{ce}}{\partial z_j} - (Ap^{2}_{j} - Ap_j), & q_j = q_y = 1  \\
    \frac{\partial \ell_{ce}}{\partial z_j} + (- Ap_{j}p_{y}), & q_j = 0,
    \end{cases}
\end{equation}
where $A$ is the smoothed/clipped replacement of $\log 0$. Note that the gradient of sample-wise CE loss $\ell_{ce}$ is:
\begin{equation}
    \frac{\partial \ell_{ce}}{\partial z_j} = 
    \begin{cases}
    p_j - 1 \leq 0, & q_j = q_y = 1  \\
    p_j \geq 0, & q_j = 0.
    \end{cases}
\end{equation}

In the case of $q_j = q_y = 1$ ($\frac{\partial \ell_{ce}}{\partial z_j} \leq 0$), the second term $Ap^{2}_{j} - Ap_j$ is an adaptive acceleration term based on $p_j$. Specifically, $Ap^{2}_{j} - Ap_j$ is a convex parabolic function in the first quadrant for $p_j \in [0, 1]$, and has the maximum value at $p_j=0.5$. Considering the learning progresses towards $p_j \to 1$, RCE increases DNN prediction on label $y$ with larger acceleration for $p_j < 0.5$ and smaller acceleration for $p_j > 0.5$. In the case of $q_j = 0$ ($\frac{\partial \ell_{ce}}{\partial z_j} \geq 0$), the second term $-Ap_{j}p_{y}$ is an adaptive acceleration on the minimization of the probability at unlabeled class ($p_j$), based on the confidence at the labeled class ($p_y$). Larger $p_y$ leads to larger acceleration, that is, if the network is more confident about its prediction at the labeled class, then the residual probabilities at other unlabeled classes should be reduced faster. When $p_y=0$, there is no acceleration, which means if the network is not confident on the labeled class at all, then the label is probably wrong, no acceleration needed.

\subsection{Discussion}\label{sec:discuss}
An easy addition to improve CE would be to upscale its gradients with a larger coefficient (\textit{e.g.}, `2CE', `5CE'). However, this will cause more overfitting (see the `5CE' curve in the following Section \ref{sec:experiments} Figure \ref{ablation_sce}). There are also other options to consider, such as MAE. Although motivated from completely different perspectives, that is, CE and RCE are measures of (information theoretic) uncertainty, while MAE is a measure of distance, we can surprisingly show that MAE is a special case of RCE at $A=-2$, when there is a single true label for $\xx$ (\textit{e.g.} $q(y|\xx)=1$ and $q(k\neq y|\xx)=0$). For MAE, we have,
\begin{equation*}
    \begin{split}
        \ell_{mae} &= \sum_{k=1}^{K} |p(k|\xx) - q(k|\xx)|
         = (1-p(y|\xx)) + \sum_{k \neq y} p(k|\xx) \\
        & = 2(1-p(y|\xx)),
    \end{split}
\end{equation*}
while, for RCE, we have,
\begin{equation*}
    \begin{split}
        \ell_{rce} &= -\sum_{k=1}^{K} p(k|\xx) \log q(k|\xx) \\
        & = -p(y|\xx)\log1 - \sum_{k \neq y} p(k|\xx)A 
        = -A\sum_{k \neq y} p(k|\xx) \\
        & = -A(1-p(y|\xx)).
    \end{split}
    \label{MAE-RCE}
\end{equation*}
That is, when $A=-2$, RCE is reduced to exactly MAE. Meanwhile, different from the GCE loss (\textit{i.e.}, a weighted MAE) \cite{zhang2018generalized},  SL is a combination of two symmetrical learning terms. 

\section{Experiments}\label{sec:experiments}
We first provide some empirical understanding of our proposed SL approach, then evaluate its robustness against noisy labels on MNIST, CIFAR-10, CIFAR-100, and a large-scale real-world noisy dataset Clothing1M.

\noindent\textbf{Noise setting:} 
We test two types of label noise: symmetric (uniform) noise and asymmetric (class-dependent) noise. Symmetric noisy labels are generated by flipping the labels of a given proportion of training samples to one of the other class labels uniformly. Whilst for asymmetric noisy labels, flipping labels only occurs within a specific set of classes \cite{patrini2017making, zhang2018generalized}, for example, for MNIST, flipping $2 \to 7$, $3 \to 8$, $5 \leftrightarrow 6$ and $7 \to 1$; for CIFAR-10, flipping TRUCK $\to$ AUTOMOBILE, BIRD $\to$ AIRPLANE, DEER $\to$ HORSE, CAT $\leftrightarrow$ DOG; for CIFAR-100, the 100 classes are grouped into 20 super-classes with each has 5 sub-classes, then flipping between two randomly selected sub-classes within each super-class.

\subsection{Empirical Understanding of SL}\label{understanding_sce}
We conduct experiments on CIFAR-10 dataset with symmetric noise towards a deeper understanding of SL. 

\noindent\textbf{Experimental setup:} We use an 8-layer CNN with 6 convolutional layers followed by 2 fully connected layers. 
All networks are trained using SGD with momentum 0.9, weight decay $10^{-4}$ and an initial learning rate of 0.01 which is divided by 10 after 40 and 80 epochs (120 epochs in total). The parameter $\alpha$, $\beta$ and $A$ in SL are set to 0.1, 1 and -6 respectively.

\noindent\textbf{Class-wise learning:} The class-wise test accuracy of SL on 40\% noisy labels has already been presented in Figure \ref{sce_noisy}. Here we provide further results for 60\% noisy labels in Figure~\ref{fig:ce_sce_test_acc_60}. Under both settings, each class is more sufficiently learned by SL than CE, accompanied by accuracy increases. Particularly for the hard classes (\textit{e.g.}, classes $2/3/4/5$), SL significantly improves their learning performance.
This is because SL facilitates an adaptive pace to encourage learning from hard classes. During learning, samples from easy classes can be quickly learned to have a high probability $p_k > 0.5$, while samples from hard classes still have a low probability $p_k < 0.5$. SL will balance this discrepancy by increasing the learning speed for samples with $p_k < 0.5$ while decreasing the learning speed for those with $p_k > 0.5$. 

\begin{figure}[!t]
	\centering
	\begin{subfigure}{0.49\linewidth}
		\includegraphics[width=\textwidth]{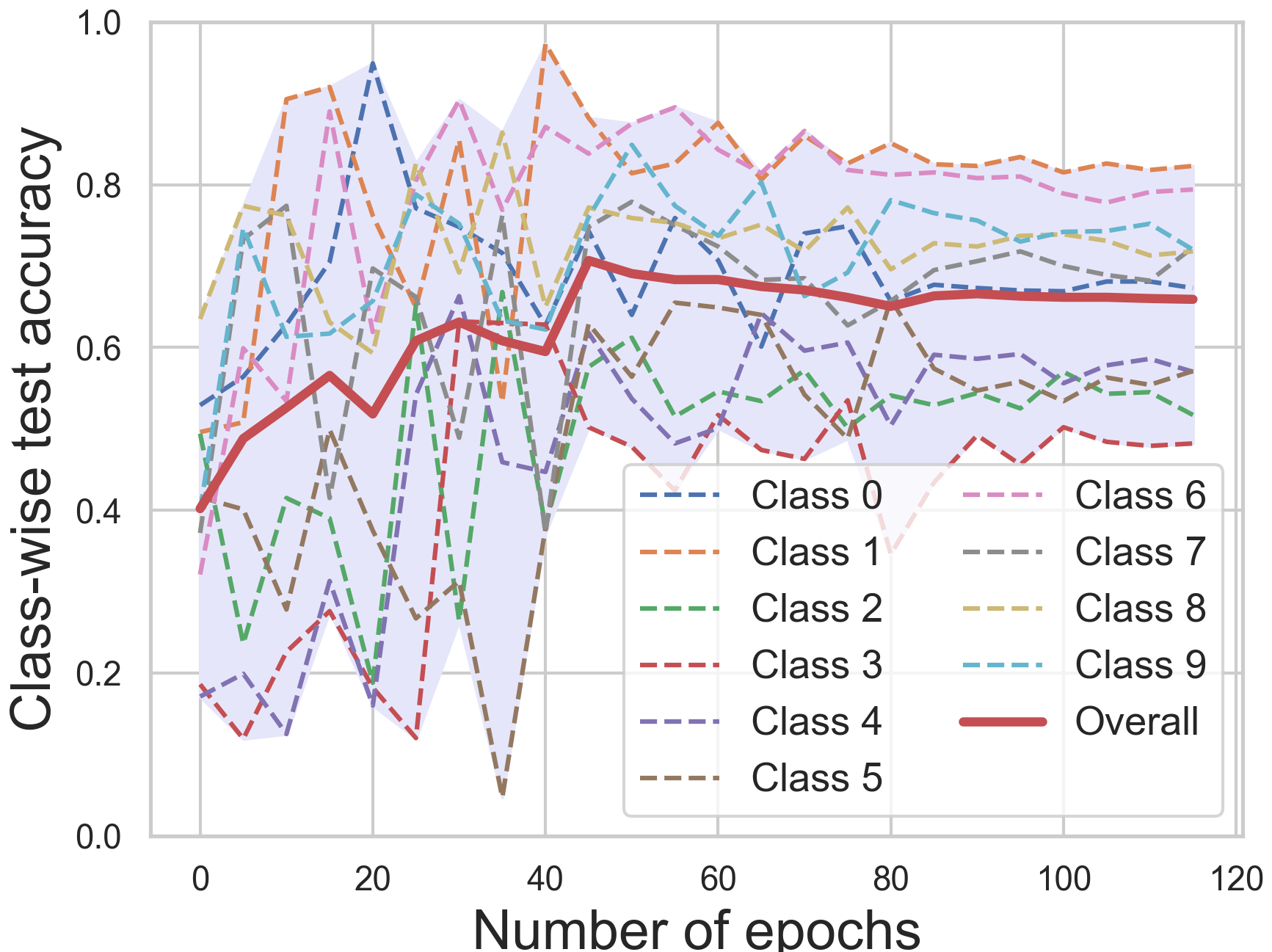}
		\caption{CE}
		\label{ce_test_acc}
	\end{subfigure}
	\begin{subfigure}{0.49\linewidth} 
		\includegraphics[width=\textwidth]{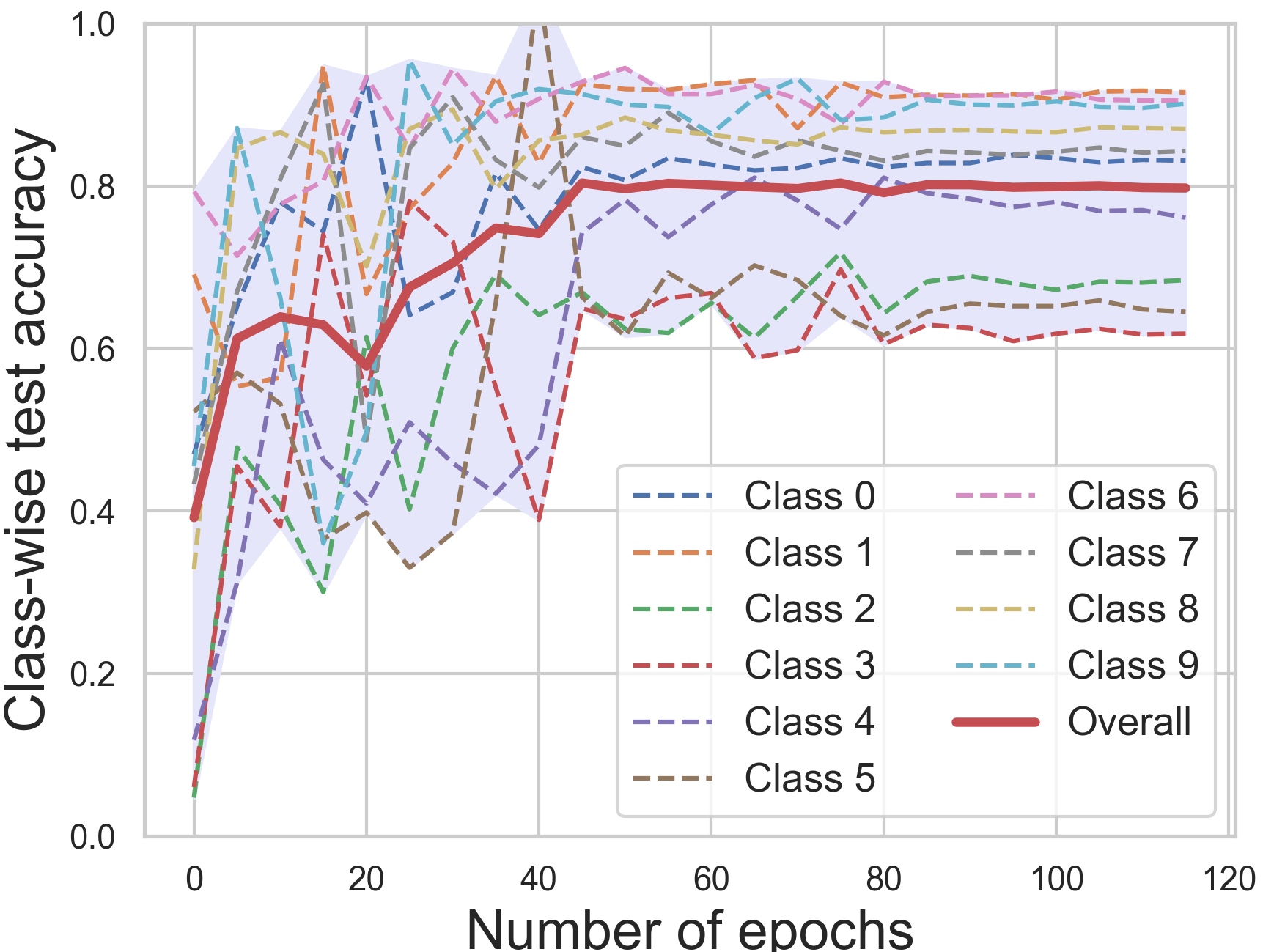}
		\caption{SL} 
		\label{sce_test_acc}
	\end{subfigure}
	\vspace{-0.1 in}
	\caption{Class-wise test accuracy of CE and SL on CIFAR-10 dataset with 60\% symmetric noisy labels. The red solid lines are the overall test accuracies.}
	\label{fig:ce_sce_test_acc_60}
	\vspace{-0.1 in}
\end{figure}

\begin{figure}[!t]
	\centering
	\begin{subfigure}{0.49\linewidth}
		\includegraphics[width=\textwidth]{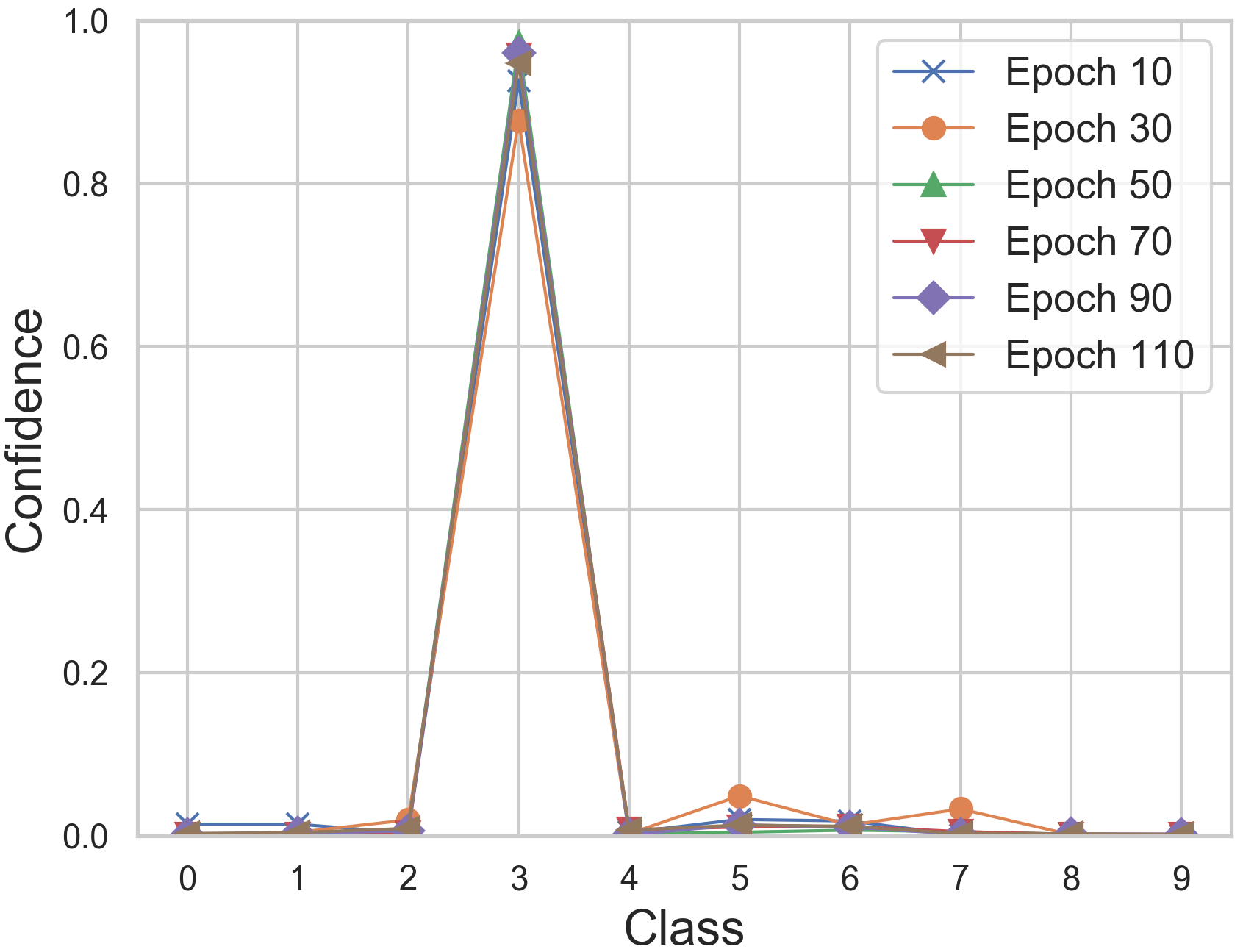}
		\caption{Prediction confidence}
		\label{sce_confidence}
	\end{subfigure}
	\begin{subfigure}{0.49\linewidth}
		\includegraphics[width=\textwidth]{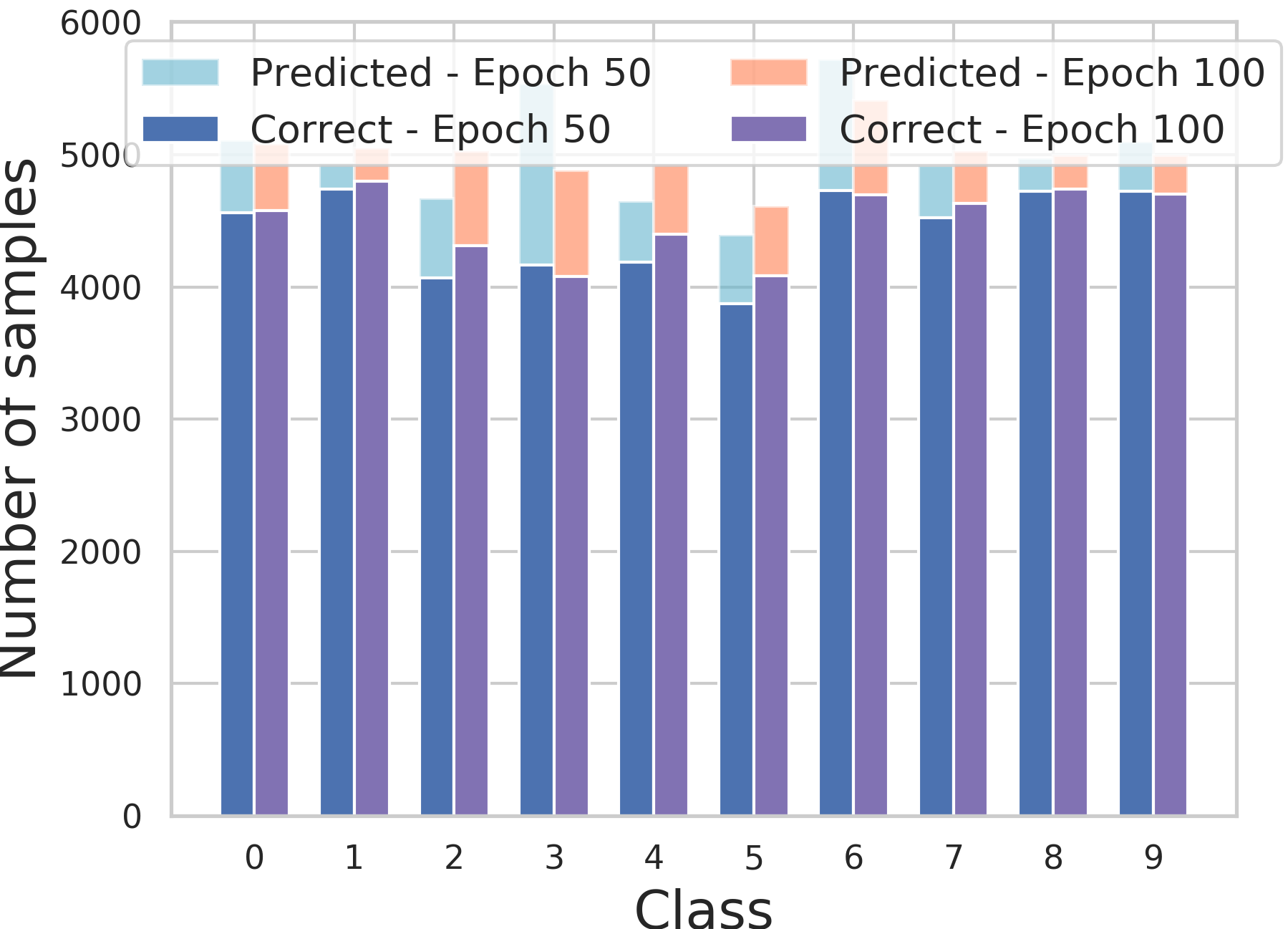}
		\caption{Prediction distribution} 
		\label{sce_prediction}
	\end{subfigure}
	\vspace{-0.1 in}
	\caption{Effect of the proposed SL on prediction confidence/distribution on CIFAR-10 with 40\% noisy labels. (a) Average confidence of the \textbf{clean portion} of class 3 samples. (b) The true positive samples (\emph{correct}) out of predictions (\emph{predicted}) for each class.}
	\label{fig:ce_sce_learning}
	\vspace{-0.1 in}
\end{figure}

\begin{figure}[!t]
	\centering
	\begin{subfigure}{0.49\linewidth}
		\includegraphics[width=\textwidth]{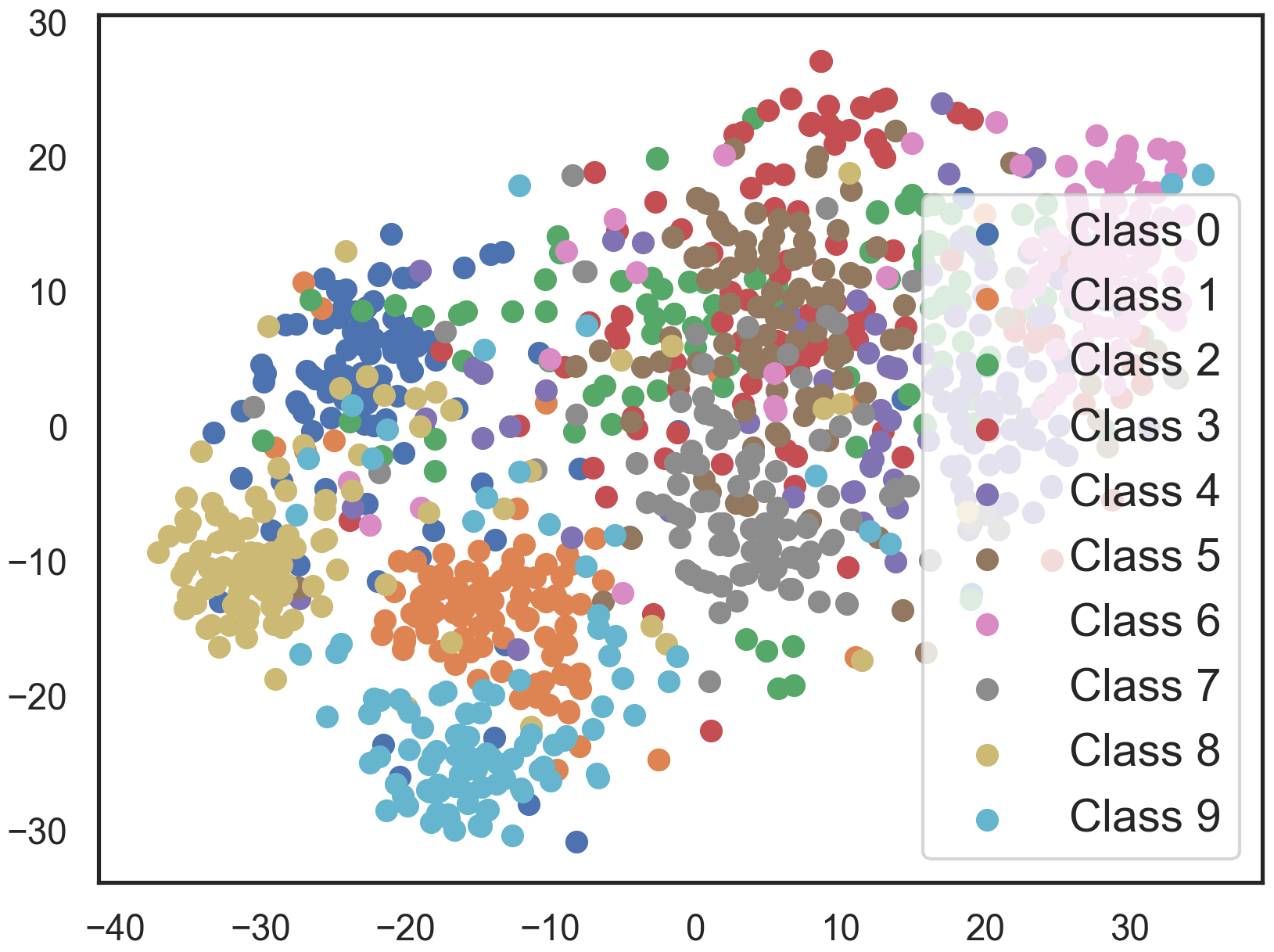}
		\caption{CE}
	\end{subfigure}
	\begin{subfigure}{0.49\linewidth} 
		\includegraphics[width=\textwidth]{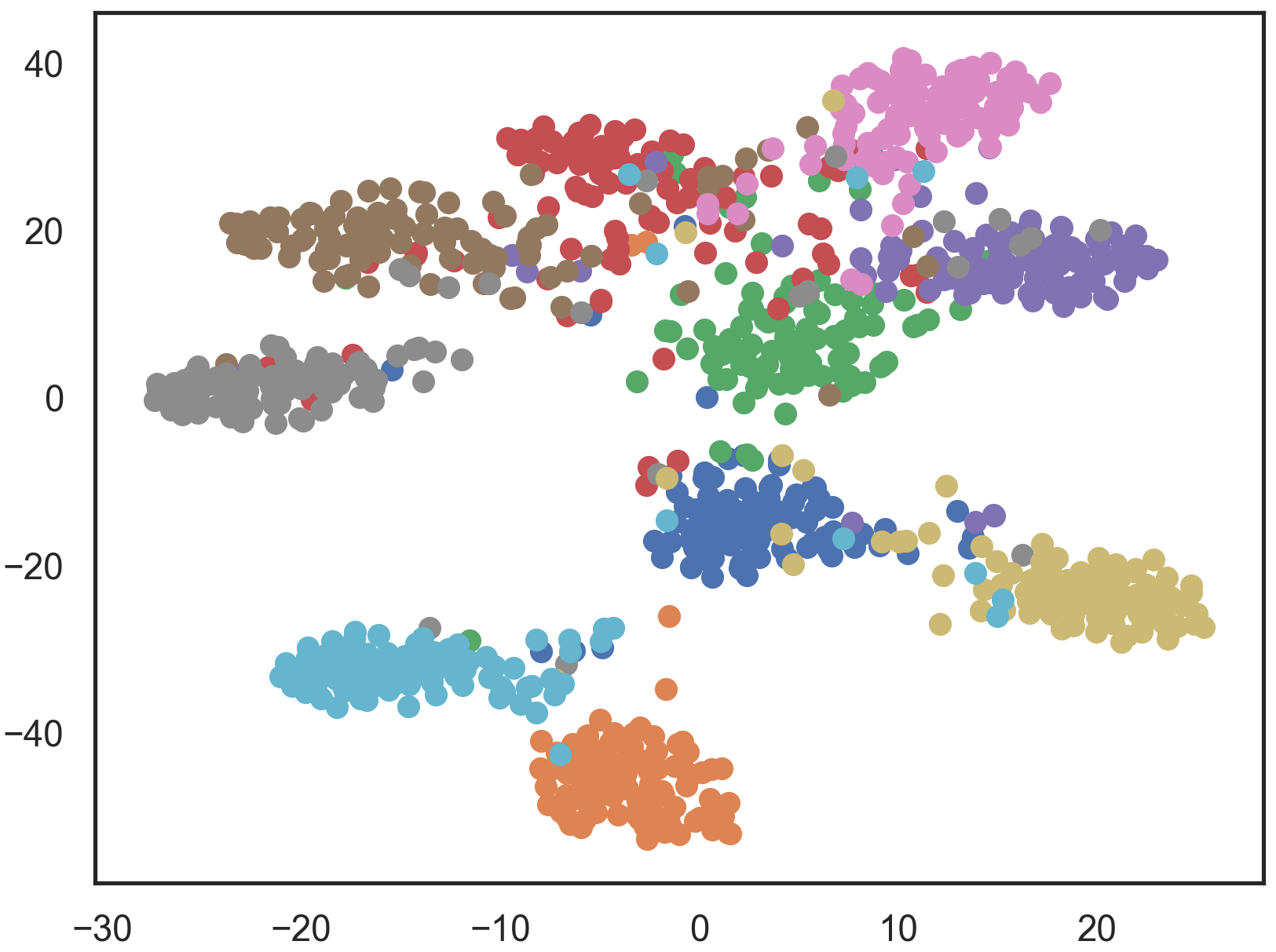}
		\caption{SL} 
	\end{subfigure}
	\vspace{-0.1 in}
	\caption{Representations learned by CE and SL on CIFAR-10 dataset with 60\% symmetric noisy labels.}
	\label{fig:rep_ce_sce_60}
	\vspace{-0.15 in}
\end{figure}

\noindent\textbf{Prediction confidence and distribution:} 
In comparison to the low confidence of CE on the clean samples in Figure \ref{ce_confidence}, we train the same network using SL under the same setting. As shown in Figure \ref{sce_confidence}, on the \textbf{clean portion} of class 3 samples, SL successfully pulls up the confidence to 0.95, while at the same time, pushes down the residual confidence at other classes to almost 0. As further shown in Figure \ref{sce_prediction}, the prediction distributions demonstrate that each class contains more than 4000 true positive samples, including the hard classes (\textit{e.g.}, class $2/3/4/5$).  Some classes (\textit{e.g.}, class $1/6/7/8/9$) even obtain close to 5000 true positive samples (the ideal case). Compared to the earlier results in Figure \ref{pred_imbalance}, SL achieves considerable improvement on each class. 

\noindent\textbf{Representations:} 
We further investigate the representations learned by SL compared to that learned by CE. We extract the high-dimensional representation at the second last dense layer, then project to a 2D embedding using t-SNE \cite{maaten2008visualizing}. The projected representations are illustrated in Figures \ref{ce_sce_rep_40} and \ref{fig:rep_ce_sce_60} for 40\% and 60\% noisy labels respectively. Under both settings, the representations learned by SL are of significantly better quality than that of CE with more separated and clearly bounded clusters.  

\noindent\textbf{Parameter analysis:}
We tune the parameters of SL: $\alpha$, $\beta$ and $A$. As $\beta$ can be reflected by $A$, here we only show results of $\alpha$ and $A$. We tested A in $[-8, -2]$ with step 2 and $\alpha \in [10^{-2}, 1]$ on CIFAR-10 under 60\% noisy labels. Figure \ref{sce_alpha} shows that large $\alpha$ (\textit{e.g.}, 1.0/0.5) tends to cause more overfitting, while small $\alpha$ (\textit{e.g.}, 0.1/0.01) can help ease the overfitting of CE. Nevertheless, the convergence can become slow when $\alpha$ is too small (\textit{e.g.}, 0.01), a behaviour like the single RCE. For this reason, a relatively large $\alpha$ can help convergence on difficult datasets such as CIFAR-100. As for parameter $A$, if the overfitting of CE is well controlled by $\alpha=0.1$, SL is not sensitive to $A$ (Figure \ref{sce_A1}). However, if CE overfitting is not properly addressed, SL becomes mildly sensitive to $A$ (Figure \ref{sce_A2}).

\begin{figure}[!t]
	\centering
	\begin{subfigure}{0.32\linewidth}
		\includegraphics[width=\textwidth]{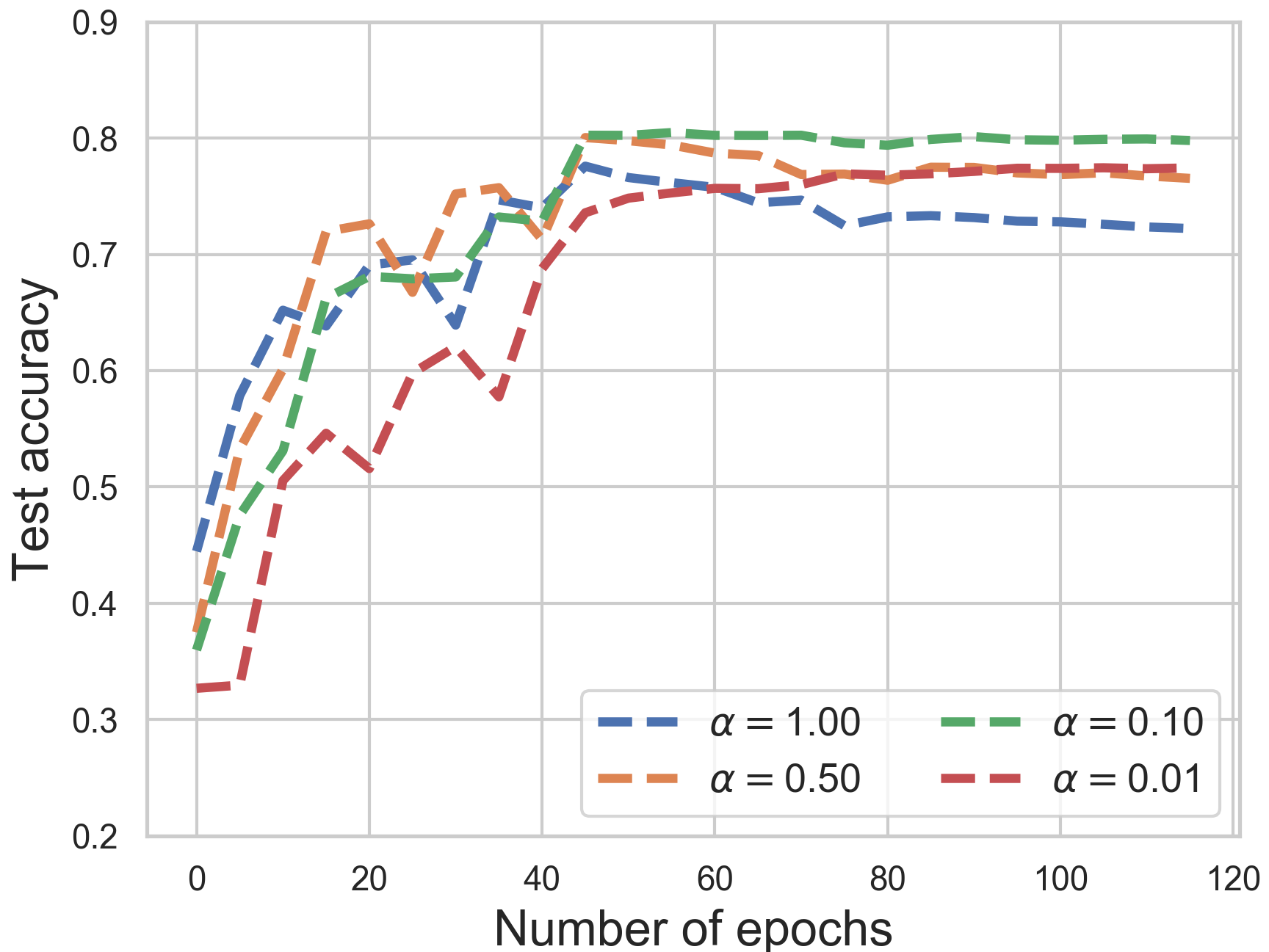}
		\caption{$\alpha$ ($A$=-6)}
		\label{sce_alpha}
	\end{subfigure}
	\begin{subfigure}{0.32\linewidth}
		\includegraphics[width=\textwidth]{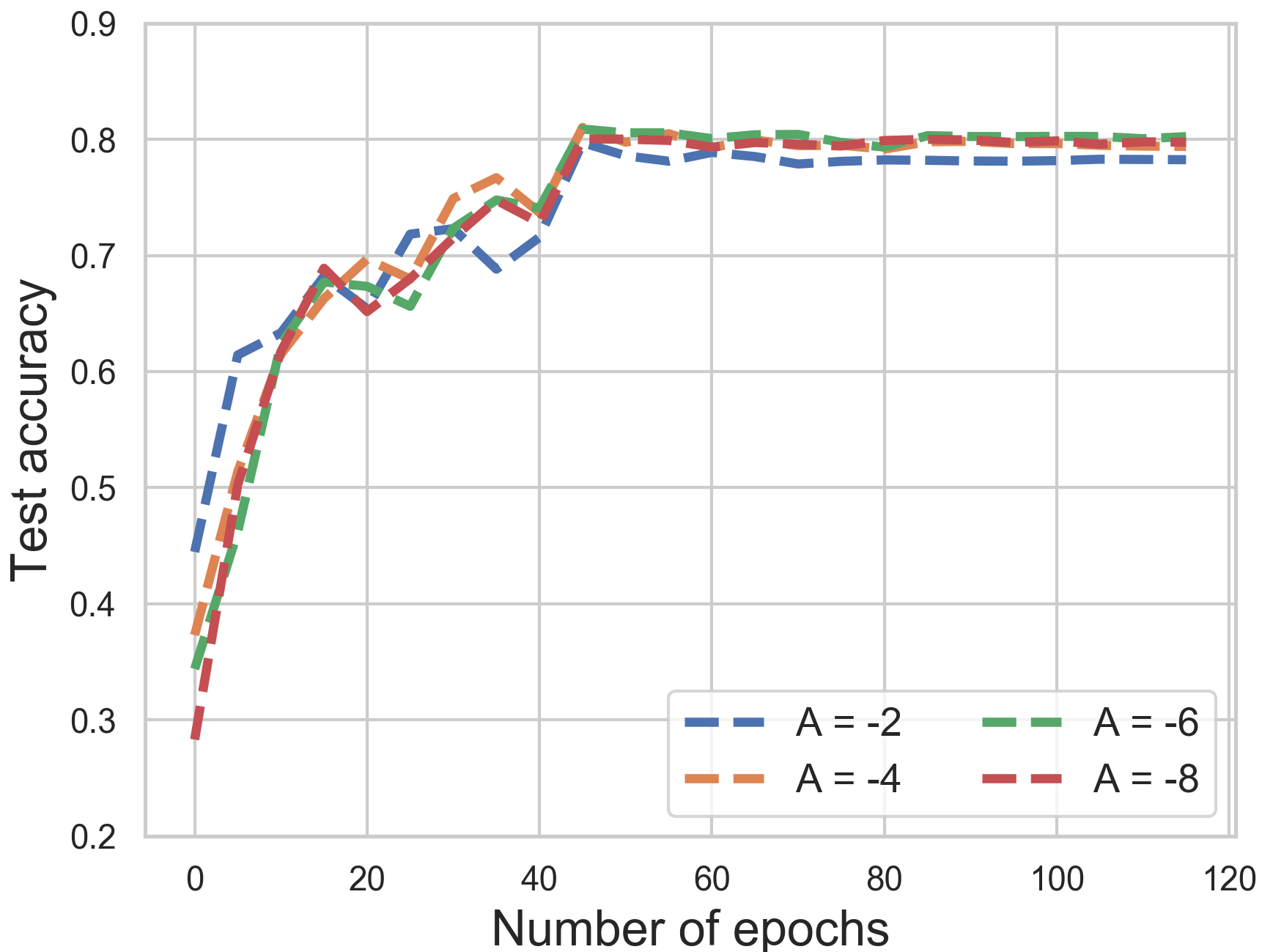}
		\caption{$A/\beta$ ($\alpha$=0.1)}
		\label{sce_A1}
	\end{subfigure}
	\begin{subfigure}{0.32\linewidth}
		\includegraphics[width=\textwidth]{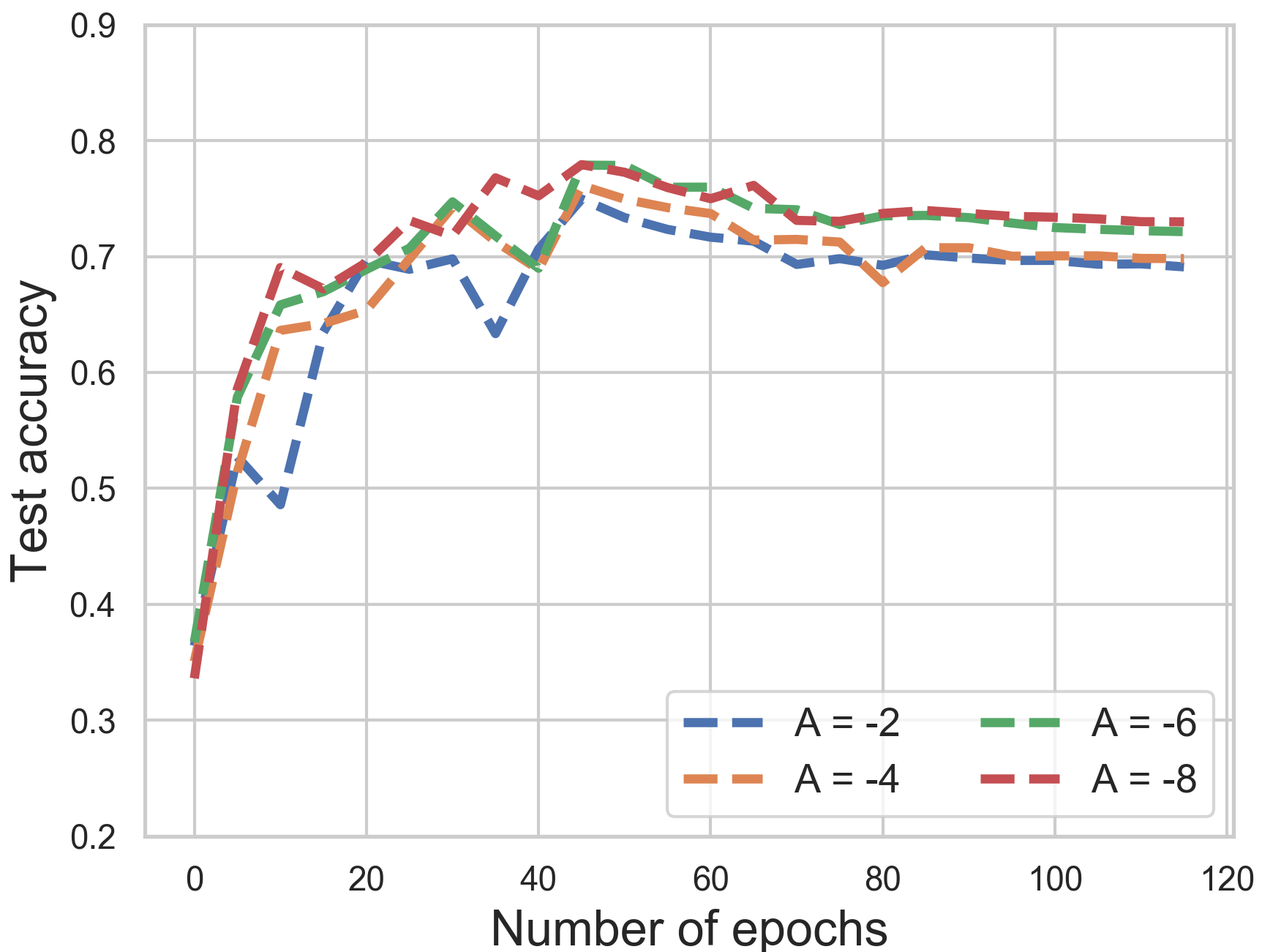}
		\caption{$A/\beta$ ($\alpha$=1)} 
		\label{sce_A2}
	\end{subfigure}
	\vspace{-0.1 in}
	\caption{Parameter analysis for SL with an 8-layer CNN on CIFAR-10 dataset under 60\% symmetric label noise: (a) Tuning $\alpha$ (fix $A$ = -6); (b) Tuning $A/\beta$ (fix $\alpha$ = 0.1); and (c) Tuning $A/\beta$ (fix $\alpha$ = 1).}
	\vspace{-0.1 in}
	\label{fig:param_analysis}
\end{figure}

\noindent\textbf{Ablation study:}
For a comprehensive understanding of each term in SL, we further conduct a series of ablation experiments on CIFAR-10 under 60\% noisy labels. Figure \ref{ablation_sce} presents the following experiments: 1) removing the RCE term; 2) removing the CE term; 3) upscaling the CE term; and 4) upscaling the RCE term. We can observe that simply upscaling CE does not help learning, or even leads to more overfitting. The RCE term itself does not exhibit overfitting even when upscaled, but it converges slowly. But when CE and RCE are combined into the SL framework, the performance is drastically improved. 

\begin{figure}[!t]
	\centering
	\begin{subfigure}{0.49\linewidth}
		\includegraphics[width=\textwidth]{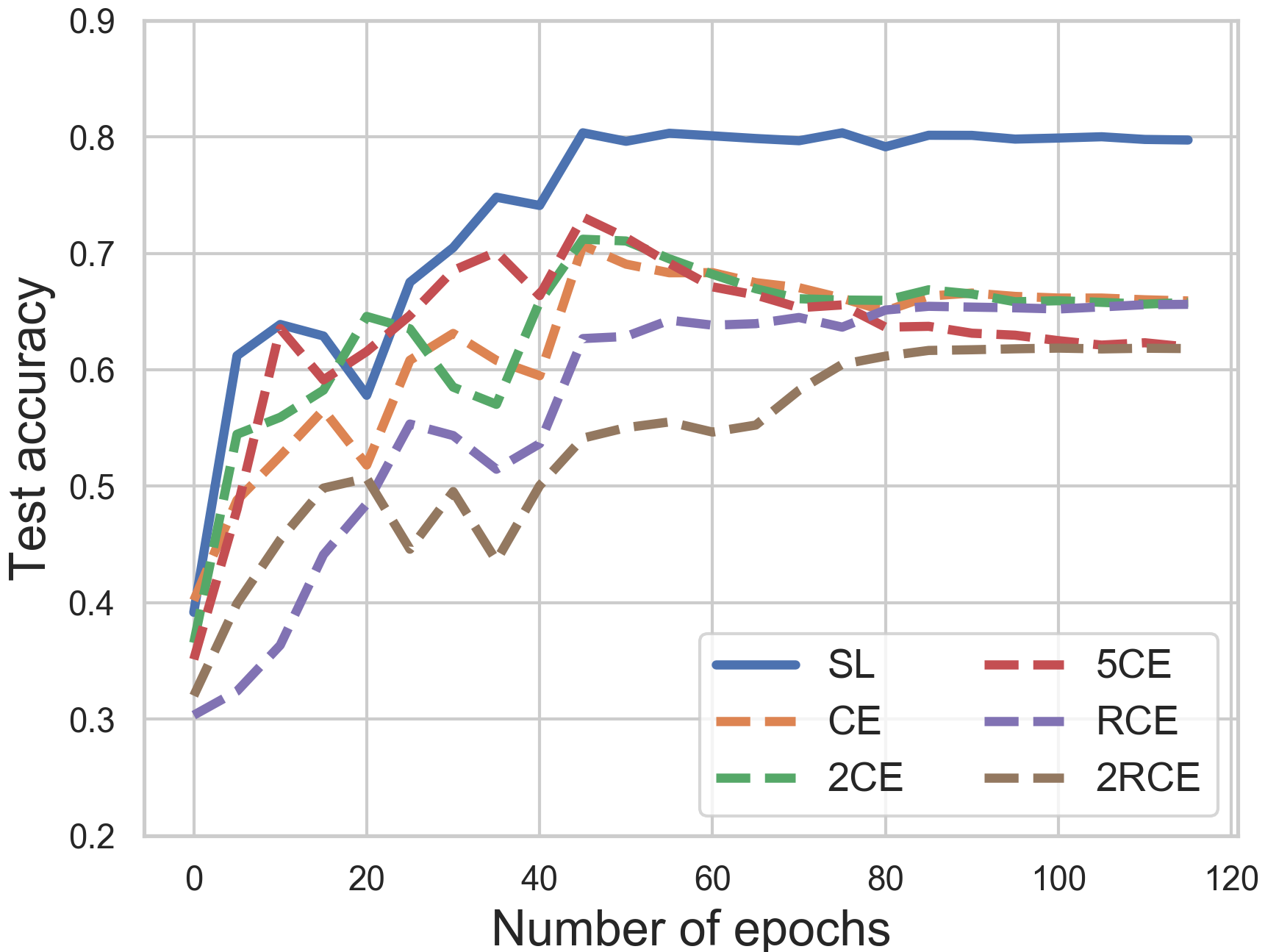}
		\caption{Ablation of SL}
		\label{ablation_sce}
	\end{subfigure}
	\begin{subfigure}{0.49\linewidth} 
		\includegraphics[width=\textwidth]{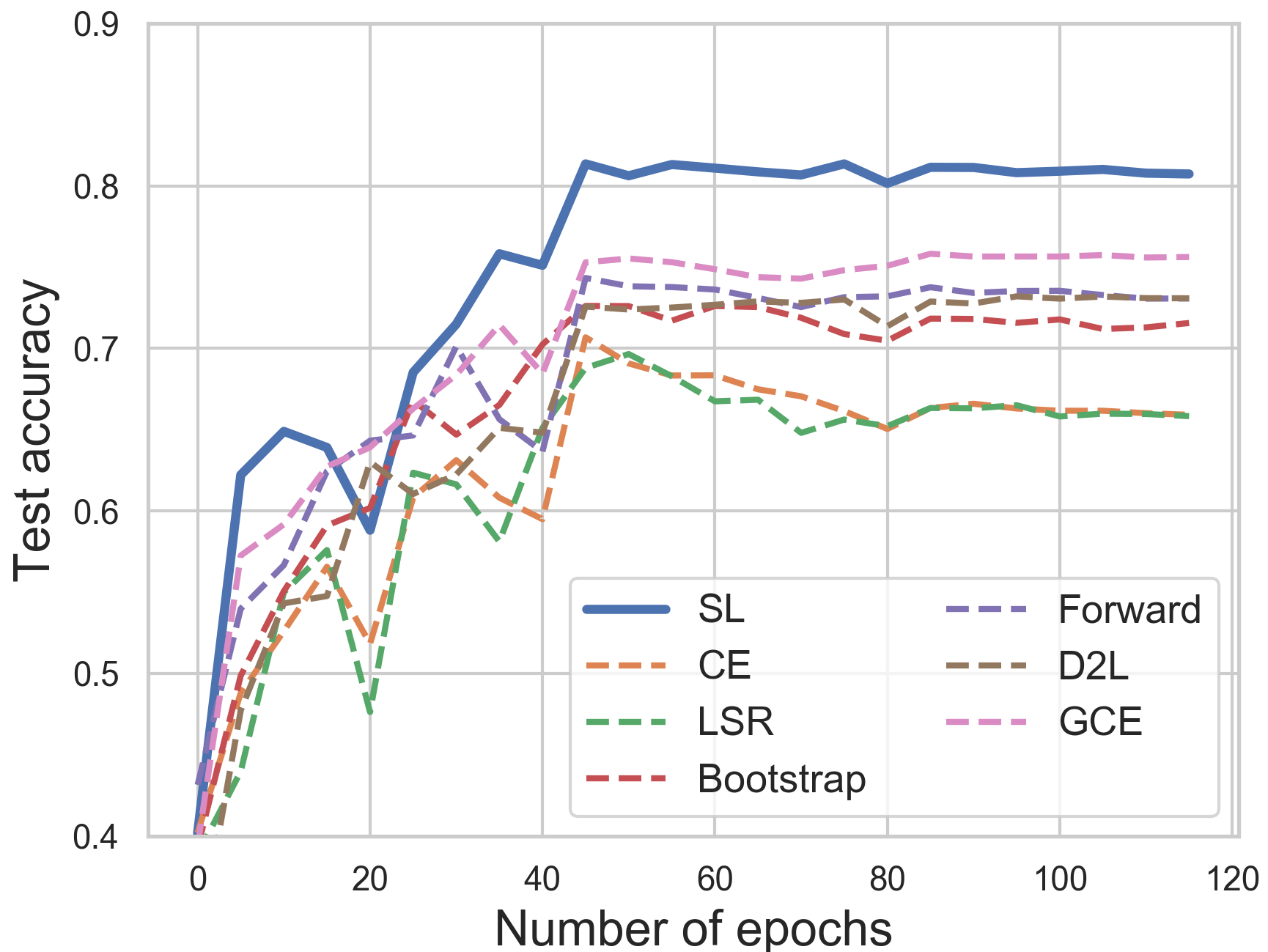}
		\caption{SL vs. baselines} 
		\label{learning_curve}
	\end{subfigure}
	\vspace{-0.1 in}
	\caption{Accuracy of different models on CIFAR-10 with 60\% symmetric label noise. (a) Ablation study of SL; (b) Comparison between SL and other baselines.}
	\vspace{-0.15 in}
	\label{fig:ce_sce_rep_40_asym}
\end{figure}

\begin{table*}[!t]
\caption{Test accuracy (\%) of different models on benchmark datasets with various rates of symmetric and asymmetric noisy labels. The average accuracy and standard deviation of 5 random runs are reported and the best results are in \textbf{bold}.}
\vspace{-0.1 in}
\label{tab:all_acc}
\centering
\begin{adjustbox}{width=1\textwidth}
\small
\begin{tabular}{c|c|ccccc|ccc}
\hline
\multirow{3}{*}{Datasets} & \multirow{3}{*}{Methods} & \multicolumn{5}{c}{Symmetric Noise} & \multicolumn{3}{c}{Asymmetric Noise} \\ \cline{3-10}
 &  & \multicolumn{5}{c}{Noise Rate $\eta$} & \multicolumn{3}{c}{Noise Rate $\eta$} \\
& & 0.0 & 0.2 & 0.4 & 0.6 & 0.8 & 0.2 & 0.3 & 0.4 \\ \hline \hline
\multirow{7}{*}{\begin{tabular}[c]{@{}c@{}}\\MNIST\\ \end{tabular}} & CE & $ 99.02 \pm 0.01$ & $88.71 \pm 0.05$ & $ 69.56 \pm 0.19 $ & $ 46.54 \pm 0.28$ & $ 21.77 \pm 0.07$ & $ 93.14 \pm 0.04$ & $87.91 \pm 0.05$  & $ 81.10 \pm 0.07$ \\
 & LSR & $99.28 \pm 0.01$ & $ 89.56 \pm 0.06$ & $ 68.11 \pm 0.24$ & $ 45.01 \pm 0.15$ & $ 21.28 \pm 0.27$ & $ 94.18 \pm 0.08$ & $88.39 \pm 0.20$ & $81.09 \pm 0.35$ \\
 & Bootstrap & $99.08 \pm 0.01$ & $ 88.72 \pm 0.14$ & $ 69.97 \pm 0.36$ & $ 47.06 \pm 0.26$ & $ 22.60 \pm 0.27$ & $ 93.31 \pm 0.03$ & $87.87 \pm 0.09$ & $80.46 \pm 0.15$ \\
 & Forward & $ 99.03 \pm 0.01$ & $ 94.85 \pm 0.07$ & $ 86.02 \pm 0.13$ & $69.77 \pm 0.41$ & $ 49.72 \pm 0.30$ & $97.31 \pm 0.05$ & $96.25 \pm 0.10$ & $95.72 \pm 0.09$ \\
 & D2L & 99.27 $\pm$ 0.01 & 98.80 $\pm$ 0.01 & 98.49 $\pm$ 0.01 & ${93.61 \pm 0.01}$ & 48.57 $\pm$ 0.04 & 98.71 $\pm$ 0.02 & $97.77 \pm 0.04$ & 93.32 $\pm$ 0.15 \\
 & GCE & $ 99.04 \pm 0.01$ & $ 98.66 \pm 0.01$ & $ 97.17 \pm 0.01$ & $ 79.65 \pm 0.14$ & $31.55 \pm 0.18$ & $96.73 \pm 0.08$ & $88.46 \pm 0.18$ & $ 81.26 \pm 0.11$  \\
& \textbf{SL} & $ \boldsymbol{99.32 \pm 0.01}$ & $ \boldsymbol{99.02 \pm 0.01}$ & $ \boldsymbol{98.97 \pm 0.01}$ & $\boldsymbol{97.40 \pm 0.02}$ & $\boldsymbol{65.02 \pm 0.19}$ & $\boldsymbol{99.18 \pm 0.01}$ & $\boldsymbol{98.85 \pm 0.01}$ & $ \boldsymbol{98.00 \pm 0.02}$ \\\hline
\hline
\multirow{7}{*}{CIFAR-10} & CE & 89.26 $\pm$ 0.03 & 82.96 $\pm$ 0.05 & 78.70 $\pm$ 0.07 & 66.62 $\pm$ 0.15 & 34.80 $\pm$ 0.25 & 85.98 $\pm$ 0.03 & 83.53 $\pm$ 0.08 & 78.51 $\pm$ 0.05 \\
& LSR & 88.57 $\pm$ 0.04 & 83.49 $\pm$ 0.05 & 78.41 $\pm$ 0.03 & 67.38 $\pm$ 0.15 & 36.30 $\pm$ 0.16 & 85.38 $\pm$ 0.05 & 82.89 $\pm$ 0.12 & 77.88 $\pm$ 0.20 \\
 & Bootstrap & 88.77 $\pm$ 0.06 & 83.95 $\pm$ 0.10 & 79.97 $\pm$ 0.07 & 71.65 $\pm$ 0.05 & 41.44 $\pm$ 0.49 & 86.57 $\pm$ 0.08 & 84.86 $\pm$ 0.05 & 79.76 $\pm$ 0.07 \\
 & Forward & 89.39 $\pm$ 0.04 & 85.83 $\pm$ 0.05 & 81.37 $\pm$ 0.03 & 73.59 $\pm$ 0.08 & 47.10 $\pm$ 0.14 & 87.68 $\pm$ 0.01 & $\boldsymbol{86.86 \pm 0.06}$ & $\boldsymbol{85.73 \pm 0.04}$ \\
 & D2L & 86.66 $\pm$ 0.05 & 81.13 $\pm$ 0.06 & 76.80 $\pm$ 0.12 & 60.67 $\pm$ 0.12 & 19.83 $\pm$ 0.05 & 82.72 $\pm$ 0.06 & 80.41 $\pm$ 0.05 & 73.33 $\pm$ 0.12 \\
 & GCE & 86.76 $\pm$ 0.03 & 84.86 $\pm$ 0.06 & 82.42 $\pm$ 0.10 & 75.20 $\pm$ 0.09 & 40.81 $\pm$ 0.24 & 84.61 $\pm$ 0.09 & 82.11 $\pm$ 0.13 & 75.32 $\pm$ 0.10 \\
 & \textbf{SL} & $\boldsymbol{89.28 \pm 0.04}$ & $\boldsymbol{87.63 \pm 0.06}$ & $\boldsymbol{85.34 \pm 0.07}$ & $\boldsymbol{80.07 \pm 0.02}$ & $\boldsymbol{53.81 \pm 0.27}$ & $\boldsymbol{88.24 \pm 0.05}$ & 85.36 $\pm$ 0.14 & 80.64 $\pm$ 0.10 \\ \hline
\hline
\multirow{7}{*}{CIFAR-100} & CE & 64.34 $\pm$ 0.37 & 59.26 $\pm$ 0.39 & 50.82 $\pm$ 0.19 & 25.39 $\pm$ 0.09 & 5.27 $\pm$ 0.06 & 62.97 $\pm$ 0.19 & 63.12 $\pm$ 0.16 & 61.85 $\pm$ 0.35 \\
 & LSR & 63.68 $\pm$ 0.54 & 58.83 $\pm$ 0.40 & 50.05 $\pm$ 0.31 & 24.68 $\pm$ 0.43 & $5.22 \pm 0.07$ & $63.03 \pm 0.48$ & $62.32 \pm 0.48$ & $61.59 \pm 0.41$ \\
 & Bootstrap & 63.26 $\pm$ 0.39 & 57.91 $\pm$ 0.42 & 48.17 $\pm$ 0.18 & 12.27 $\pm$ 0.11 & $1.00 \pm 0.01$ & $63.44 \pm 0.35$ & $63.18 \pm 0.35$ & $62.08 \pm 0.22$ \\
 & Forward & 63.99 $\pm$ 0.52 & 59.75 $\pm$ 0.34 & 53.13 $\pm$ 0.28 & 24.70 $\pm$ 0.26 & $2.65 \pm 0.03$ & $64.09 \pm 0.61$ & $64.00 \pm 0.32$ & $60.91 \pm 0.36$\\
 & D2L & $64.60 \pm 0.31$ & $59.20 \pm 0.43$ & $52.01 \pm 0.37$ & $35.27 \pm 0.28$ & $5.33 \pm 0.54$ & $62.43 \pm 0.28$ & $63.20 \pm 0.27$ & $61.35 \pm 0.66$ \\
 & GCE & $64.43 \pm 0.20$ & $59.06 \pm 0.27$ & $53.25 \pm 0.65$ & $36.16 \pm 0.74$ & $
 8.43 \pm 0.80$ & $63.03 \pm 0.22$ & $63.17 \pm 0.26$ & $61.69 \pm 1.15$ \\
 & \textbf{SL} & $\boldsymbol{66.75 \pm 0.04}$ & $\boldsymbol{60.01 \pm 0.19}$ & $\boldsymbol{53.69 \pm 0.07}$ & $\boldsymbol{41.47 \pm 0.04}$ & $\boldsymbol{15.00 \pm 0.04}$ & $\boldsymbol{65.58 \pm 0.06}$ & $\boldsymbol{65.14 \pm 0.05}$ & $\boldsymbol{63.10 \pm 0.13}$
\\ \hline
\end{tabular}
\end{adjustbox}
\label{result_table}
\vspace{-0.15 in}
\end{table*}

\subsection{Robustness to Noisy Labels}\label{benckmark_robust}

\noindent\textbf{Baselines:} We compare SL with 5 recently proposed noisy label learning methods as well as the standard CE loss: (1) Forward \cite{patrini2017making}: Training with label correction by multiplying the network prediction with the ground truth noise matrix; (2) Bootstrap \cite{reed2014training}: Training with new labels generated by a convex combination of the raw labels and the predicted labels; (3) GCE \cite{zhang2018generalized}: Training with a noise robust loss encompassing both MAE and CE; (4) D2L \cite{ma2018dimensionality}: Training with subspace dimensionality adapted labels; (5) Label Smoothing Regularization (LSR) \cite{pereyra2017regularizing}: Training with CE on soft labels, rather than the one-hot labels; and (6) CE: Training with standard cross entropy loss.

\noindent\textbf{Experimental setup:} 
Experiments are conducted on MNIST \cite{lecun1998gradient}, CIFAR-10 \cite{krizhevsky2009learning} and CIFAR-100 \cite{krizhevsky2009learning}. We use a 4-layer CNN for MNIST, the same network as Section \ref{understanding_sce} for CIFAR-10 and a ResNet-44 \cite{he2016deep} for CIFAR-100. Parameters for the baselines are configured according to their original papers. For our SL, we set $A = -4$ for all datasets, and $\alpha = 0.01, \beta = 1.0$ for MNIST, $\alpha = 0.1, \beta = 1.0$ for CIFAR-10, $\alpha = 6.0, \beta = 0.1$ for CIFAR-100 (a dataset known for hard convergence)\footnote{For 40\% asymmetric noise, $\beta$ is set to 5.0 for CIFAR-10 and $\alpha$ is set to 2.0 for CIFAR-100. Other parameters are unchanged.}. All networks are trained using SGD with momentum 0.9, weight decay $5 \times 10^{-3}$ and an initial learning rate of 0.1. The learning rate is divided by 10 after 10 and 30 epochs for MNIST (50 epochs in total), after 40 and 80 epochs for CIFAR-10 (120 epochs in total), and after 80 and 120 epochs for CIFAR-100 (150 epochs in total). Simple data augmentation techniques (width/height shift and horizontal flip) are applied on CIFAR-10 and CIFAR-100. For symmetric noise, we test varying noise rates $\eta \in [0\%, 80\%]$, while for asymmetric noise, we test noise rates $\eta \in [0\%, 40\%]$. 

\noindent\textbf{Robustness performance:} The classification accuracies are reported in Table \ref{tab:all_acc}. As can be seen, SL improves on the baselines via a large margin for almost all noise rates and all datasets. Note that Forward sometimes also delivers a relatively good performance, as we directly provide it with the ground truth noise matrix. We also find that SL can be more effective than GCE, particularly for high noise rates. The complete learning procedures of SL and baselines on CIFAR-10 are illustrated in Figure~\ref{learning_curve}. SL shows a clear advantage over other methods, especially in the later stages of learning with noisy labels. This is likely because that, in the later stages of DNN learning, other methods all suffer to some extent from under learning on hard classes, while SL ensures sufficient learning on them. 

\noindent\textbf{Enhancing existing methods with SL:}
We introduce some general principles to incorporate SL into existing methods to further enhance their performance. For methods that use robust loss functions or label corrections, the RCE term of SL can be directly added to the loss function, while for methods that still use the standard CE loss without label corrections, SL can be used with small $\alpha$ and large $\beta$ to replace the existing loss function. This is to avoid overfitting while promote sufficient learning. As a proof-of-concept, we conduct experiments to enhance Forward and LSR with SL. For ``Forward+SL'', we add the RCE term to the Forward loss with $\beta = 1.0/0.1$ for symmetric/asymmetric noise respectively, while for ``LSR+SL'', we use the SL loss with the same setting in Table \ref{tab:all_acc}. Results on CIFAR-10 are presented in Table \ref{tab:sce_d2l}. Both the enhanced methods demonstrate a clear performance improvement over their original versions (Forward or LSR) both on symmetric and asymmetric noise. However, in some scenarios, the enhanced methods are still not as good as SL. This often occurs when there is a large performance gap between the original methods and SL. We believe that with more adaptive incorporation and careful parameter tuning, SL can be combined with existing approaches to achieve even better performance.

\begin{table}[!t]
    \small
    \centering
    \caption{Accuracy (\%) of SL-boosted Forward, D2L and LSR methods on CIFAR-10 under various label noise.}
    \vspace{-0.1 in}
    \label{tab:sce_d2l}
    \begin{adjustbox}{width=0.49\textwidth}
    \begin{tabular}{l|cc|c}
    \hline
    \multirow{2}{*}{Method}  
    & \multicolumn{2}{c|}{Symmetric noise}  & \multicolumn{1}{c}{Asymmetric noise} \\
     & 0.4 & 0.6  & 0.4  \\
    \hline
    Forward+SL & $84.54 \pm 0.03$ & $79.64 \pm 0.04$ & $86.22 \pm 0.18$ \\
    LSR+SL & $85.20 \pm 0.01$ & $ 79.28 \pm 0.05 $ & $ 80.99 \pm $ 0.30\\
    \hline
    \end{tabular}
    \end{adjustbox}
    \vspace{-0.15 in}
\end{table}

\subsection{Experiments on Real-world Noisy Dataset}\label{sec:Clothing1M}
In the above experiments, we have seen that SL achieves excellent performance on datasets with manually corrupted noisy labels. Next, we assess its applicability for a real-world large-scale noisy dataset: Clothing1M \cite{xiao2015learning}. 

The Clothing1M dataset contains 1 million images of clothing obtained from online shopping websites with 14 classes: T-shirt, Shirt, Knitwear, Chiffon, Sweater, Hoodie, Windbreaker, Jacket, Down Coat, Suit, Shawl, Dress, Vest, and Underwear. The labels are generated by the surrounding text of images and are thus extremely noisy. The overall accuracy of the labels is ${\small\sim61.54\%}$, with some pairs of classes frequently confused with each other (\textit{e.g.}, Knitwear and Sweater), which may contain both symmetric and asymmetric label noise. The dataset also provides $50k$, $14k$, $10k$ manually refined clean data for training, validation and testing respectively, but we did not use the $50k$ clean data. The classification accuracy on the $10k$ clean testing data is used as the evaluation metric. 

\noindent\textbf{Experimental setup:} 
We use ResNet-50 with ImageNet pretrained weights  similar to \cite{patrini2017making,xiao2015learning}. For preprocessing, images are resized to $256 \times 256$, with mean value subtracted and cropped at the center of $224 \times 224$. We train the models with batch size 64 and initial learning rate $10^{-3}$, which is reduced by $1/10$ after 5 epochs (10 epochs in total). SGD with a momentum 0.9 and weight decay $10^{-3}$ are adopted as the optimizer. Other settings are the same as Section \ref{benckmark_robust}. 

\noindent\textbf{Results:} 
As shown in Table \ref{tab:clothing}, SL obtains the highest performance compared to the baselines.  We also find that Forward achieves a relatively good result, though it requires the use of the part of data that both have noisy and clean labels to obtain the noise transition matrix, which is not often available in real-world settings. SL only requires the noisy data and does not require extra auxiliary information.

\begin{table}[!t]
\centering
\small
\caption{Accuracy (\%) of different models on real-world noisy dataset Clothing1M. The best results are in \textbf{bold}.}
\vspace{-0.1 in}
\label{tab:clothing}
\begin{adjustbox}{width=0.49\textwidth}
\begin{tabular}{l|cccccc}
\hline
Methods & CE & Bootstrap & Forward & D2L & GCE & \textbf{SL}\\ \hline
Acc & 68.80 & 68.94 & 69.84 & 69.47 & 69.75 & $\bm{71.02}$\\
\hline
\end{tabular}
\end{adjustbox}
\vspace{-0.15 in}
\end{table}

\section{Conclusions}\label{sec:conclusion}
In this paper, we identified a deficiency of cross entropy (CE) used in DNN learning for noisy labels, in relation to under learning of hard classes. To address this issue, we proposed the Symmetric cross entropy Learning (SL), boosting CE symmetrically with the noise robust Reverse Cross Entropy (RCE), to simultaneously addresses its under learning and overfitting problems. We provided both theoretical and empirical understanding on SL, and demonstrated its effectiveness against various types and rates of label noise on both benchmark and real-world datasets. Overall, due to its simplicity and ease of implementation, we believe SL is a promising loss function for training robust DNNs against noisy labels, and an attractive framework to be used along with other techniques for datasets containing noisy labels.

{
\small
\bibliographystyle{ieee_fullname}
\bibliography{egbib}
}

\newpage
\onecolumn
\appendix
\setcounter{theorem}{0}

\section{Proof for Theorem 1}\label{appendix_proof}

\begin{theorem}
In a multi-class classification problem, $\ell_{rce}$ is noise tolerant under symmetric or uniform label noise if noise rate $\eta < 1 -\frac{1}{K}$. And, if $R(f^*) = 0$, $\ell_{rce}$ is also noise tolerant under asymmetric or class-dependent label noise when noise rate $\eta_{y k} < 1-\eta_{y}$ with $\sum_{k \neq y}\eta_{y k}=\eta_{y}$.
\end{theorem}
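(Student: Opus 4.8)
The plan is to prove both parts from a single structural fact: $\ell_{rce}$ is a \emph{symmetric} loss. First I would record the closed form for a single true label, $\ell_{rce}(f(\xx),k) = -A\,(1-p(k|\xx))$ (as obtained in the Discussion), which shows $\ell_{rce}\ge 0$ since $A<0$, and, crucially, that summing over all $K$ candidate labels gives $\sum_{k=1}^{K}\ell_{rce}(f(\xx),k) = -A(K-1)$, a constant independent of the classifier $f$. This prediction-independent total is the lever that lets the noisy risk be rewritten in terms of the clean risk.

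For the symmetric case I would expand $R^\eta(f)=\E_\xx\E_{y|\xx}\E_{\hat y|\xx,y}\ell_{rce}$, split the inner expectation into the event $\hat y=y$ (probability $1-\eta$) and the $K-1$ equiprobable flips (each probability $\tfrac{\eta}{K-1}$), and substitute $\sum_{k\neq y}\ell_{rce}=-A(K-1)-\ell_{rce}(f(\xx),y)$. This produces $R^\eta(f)=\big(1-\tfrac{\eta K}{K-1}\big)R(f)-A\eta$, affine in $R(f)$ with positive slope whenever $\eta<1-\tfrac{1}{K}$, so $R^\eta(f^*)-R^\eta(f)=\big(1-\tfrac{\eta K}{K-1}\big)\big(R(f^*)-R(f)\big)\le 0$ for all $f$ and $f^*$ minimizes $R^\eta$ --- exactly the argument already given in the main text.

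The asymmetric case is where the real work lies, and I expect it to be the main obstacle: the flip probabilities $\eta_{yk}$ are no longer uniform, so $R^\eta$ cannot be reduced to a single affine function of $R$, and the constant-sum identity alone is insufficient. Here I would exploit the hypothesis $R(f^*)=0$. Since $\ell_{rce}\ge 0$, zero risk forces $\ell_{rce}(f^*(\xx),y)=0$ almost everywhere, i.e.\ $p^*(y|\xx)=1$ on the true label, whence $p^*(i|\xx)=0$ and $\ell_{rce}(f^*(\xx),i)=-A$ for every $i\neq y$. Substituting these exact values into $R^\eta(f^*)-R^\eta(f)$, with $R^\eta(f)=\E_\xx\E_{y|\xx}\big[(1-\eta_y)\ell_{rce}(f(\xx),y)+\sum_{i\neq y}\eta_{yi}\ell_{rce}(f(\xx),i)\big]$, and then inserting the closed form $\ell_{rce}=-A(1-p)$ together with $1-p(y|\xx)=\sum_{i\neq y}p(i|\xx)$, the difference collapses to
\[
R^\eta(f^*)-R^\eta(f)=A\,\E_\xx\E_{y|\xx}\sum_{i\neq y}p(i|\xx)\,\big[(1-\eta_y)-\eta_{yi}\big].
\]
Because $A<0$, $p(i|\xx)\ge 0$, and the assumption $\eta_{yk}<1-\eta_y$ makes every bracket strictly positive, the entire right-hand side is $\le 0$, so $f^*$ remains a global minimizer of $R^\eta$ and $\ell_{rce}$ is noise tolerant. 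The delicate points are the sign bookkeeping forced by $A<0$ and verifying that the per-pair bound $\eta_{yk}<1-\eta_y$ is precisely what secures term-by-term nonnegativity; I would sanity-check both on the binary ($K=2$) case before trusting the general reduction.
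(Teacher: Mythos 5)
Your symmetric-noise argument is the paper's own proof, essentially verbatim: the same decomposition of $\E_{\hat y|\xx,y}$, the same constant-sum identity $\sum_{k=1}^K \ell_{rce}(f(\xx),k)=-(K-1)A$, and the same affine relation $R^\eta(f)=\bigl(1-\tfrac{\eta K}{K-1}\bigr)R(f)-A\eta$. For the asymmetric case your route is organized differently: you compare $f^*$ against an \emph{arbitrary} $f$ and collapse the difference to the closed form $R^\eta(f^*)-R^\eta(f)=A\,\E_{\xx,y}\sum_{i\neq y}p(i|\xx)\bigl[(1-\eta_y)-\eta_{yi}\bigr]$, whereas the paper works with an arbitrary global minimizer $f_\eta^*$ of the noisy risk: it rewrites $R^\eta(f)$ via the constant-sum identity as $-(K-1)A\,\E_{\xx,y}(1-\eta_y)-\E_{\xx,y}\bigl[\sum_{k\neq y}(1-\eta_y-\eta_{yk})\,\ell_{rce}(f(\xx),k)\bigr]$, invokes $R^\eta(f_\eta^*)\le R^\eta(f^*)$, and then shows that this inequality forces $p_k=0$ for all $k\neq y$, i.e.\ $f_\eta^*=f^*$. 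The two computations are algebraically the same identity; the difference is which function is held fixed and, crucially, what is concluded at the end.

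That difference is the one genuine gap in your write-up. Noise tolerance, as defined in the paper, requires that $f_\eta^*$ --- whichever global minimizer of $R^\eta$ noisy training selects --- has the same misclassification probability as $f^*$ on clean data. You conclude only that ``$f^*$ remains a global minimizer of $R^\eta$''; as stated, this leaves open the possibility that $R^\eta$ has \emph{other} global minimizers that classify poorly on clean data, in which case the loss would not be noise tolerant. Fortunately, your own formula closes the gap in one line: since $A<0$ and every bracket $(1-\eta_y)-\eta_{yi}$ is strictly positive, $R^\eta(f)=R^\eta(f^*)$ can hold only if $p(i|\xx)=0$ almost everywhere for every $i\neq y$, so any global minimizer of $R^\eta$ predicts exactly as $f^*$ does; this equality-case analysis is precisely what the paper's argument with $f_\eta^*$ carries out explicitly. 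Add that sentence and your asymmetric proof is complete. (In the symmetric case no such issue arises, for you and for the paper alike, because the affine relation with positive slope makes the argmin sets of $R$ and $R^\eta$ coincide.)
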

\begin{proof}
For symmetric noise:
\begin{align*}
	\small
	R^\eta(f) & =  \E_{\xx, \hat{y}} \ell_{rce}(f(\xx), \hat{y}) =  \E_{\xx} \E_{y | \xx} \E_{\hat{y} | \xx, y} \ell_{rce}(f(\xx), \hat{y}) \\
		&= \E_{\xx} \E_{y | \xx} \Big[ (1-\eta) \ell_{rce}(f(\xx), y) + \frac{\eta}{K-1} \sum_{k\neq y} \ell_{rce}(f(\xx), k) \Big] \\
		& =  (1 - \eta) R(f) +  \frac{\eta}{K-1} \bigg(\E_{\xx, y}\bigg[\sum_{k=1}^{K}\ell_{rce}(f(\xx), k)\bigg] - R(f)\bigg)\\
		& = R(f)\left(1-\frac{\eta K}{K-1}\right) - A\eta,
\end{align*}
where the last equality holds due to $\sum_{k=1}^{K}\ell_{rce}(f(\xx), k) = - (K-1)A$ following Eq. \eqref{eq:rce} and the definition of $\log 0 = A$ (a negative constant). Thus, 
	\[R^\eta(f^*)-R^\eta(f)=(1-\frac{\eta K}{K-1})(R(f^*)-R(f)) \leq 0,\]
because $\eta < 1 - \frac{1}{K}$ and $f^*$ is a global minimizer of $R(f)$. This proves $f^*$ is also the global minimizer of risk $R^\eta(f)$, that is, $\ell_{rce}$ is noise tolerant. 
	
For asymmetric or class-dependent noise, $1-\eta_y$ is the probability of a label being correct (\textit{i.e.,} $k=y$), and the noise condition $\eta_{y k} < 1-\eta_{y}$ generally states that a sample $\xx$ still has the highest probability of being in the correct class $y$, though it has probability of $\eta_{y k}$ being in an arbitrary noisy (incorrect) class $k \neq y$. Considering the noise transition matrix between classes $[\eta_{i j}], \forall i,j \in \{1,2, \cdots, K\}$, this condition only requires that the matrix is diagonal dominated by $\eta_{i i}$ (\textit{i.e.,} the correct class probability $1 - \eta_{y}$). Following the symmetric case, here we have,
	\small{
	\begin{align}
	\label{eq:cc_1}
	\begin{split}
	R^\eta(f) & = \E_{\xx, \hat{y}} \ell_{rce}(f(\xx), \hat{y}) =  \E_{\xx} \E_{y | \xx} \E_{\hat{y} | \xx, y} \ell_{rce}(f(\xx), \hat{y}) \\ 
	& = \E_{\xx} \E_{y | \xx} \Big[ (1- \eta_{y}) \ell_{rce}(f(\xx), y) +   \sum_{k \neq y} \eta_{y k} \ell_{rce} (f(\xx), k) \Big] \\
	& = \E_{\xx, y} \Big[ (1-\eta_{y})\Big(\sum_{k=1}^{K}\ell_{rce}(f(\xx), k) - \sum_{k \neq y} \ell_{rce}(f(\xx), k)\Big)\Big] + \E_{\xx, y} \Big[\sum_{k \neq y} \eta_{y k} \ell_{rce}(f(\xx), k)\Big] \\
	& = \E_{\xx, y} \Big[ (1-\eta_{y})\big(-(K-1)A - \sum_{k \neq y} \ell_{rce}(f(\xx), k)\big)\Big] + \E_{\xx, y}\Big[ \sum_{k \neq y} \eta_{y k} \ell_{rce}(f(\xx), k)\Big] \\
	& = - (K-1)A \E_{\xx, y} (1-\eta_{y})-\E_{\xx, y} \Big[\sum_{k \neq y}(1-\eta_{y}-\eta_{y k}) \ell_{rce}(f(\xx), k)\Big].
	\end{split}
	\end{align}	}
	As $f^{\ast}_{\eta}$ is the minimizer of $R^\eta(f)$, $R^{\eta}(f_{\eta}^{\ast})-R^{\eta}(f^{\ast}) \leq  0$. So, from Eq.(\ref{eq:cc_1}), we have,
	\begin{align}
	\label{eq:cc_2}
	\E_{\xx, y}\Big[\sum_{k\neq y}(1-\eta_{y}-\eta_{y k})\big(\underbrace{\ell_{rce} (f^{\ast}(\xx),k)}_{\ell_{rce}^{*}}-\underbrace{\ell_{rce}(f^{\ast}_{\eta}(\xx),k)}_{\ell_{rce}^{\eta *}}\big)\Big] \leq 0.
	\end{align}
	Next, we prove, $f^{\ast}_{\eta}=f^{\ast}$ holds following Eq. (\ref{eq:cc_2}). First, $(1-\eta_{y}-\eta_{y k})>0$ as per the assumption that $\eta_{y k} < 1-\eta_{y}$. Since we are given $R(f^*)=0$, we have $\ell_{rce}(f^*(\xx),k)= -A$ for all $k\neq y$. Also, by the definition of $\ell_{rce}^{\eta*}$, we have $\ell_{rce}(f_\eta^*(\xx) , k) =-A(1-p_k)\leq -A$, $\forall k\neq y$. Thus, for Eq.~(\ref{eq:cc_2}) to hold (\textit{e.g.} $\ell_{rec}(f_\eta^*(\xx),k) \geq \ell_{rec}(f^*(\xx),k)$), it must be the case that $p_k=0,\;\forall k\neq y$, that is, $\ell_{rec}(f_\eta^*(\xx),k) = \ell_{rec}(f^*(\xx),k)$ for all $k \in \{1,2, \cdots, K\}$, thus $f^{\ast}_{\eta}=f^{\ast}$ which completes the proof.
\end{proof}

\section{Gradient Derivation of SL}\label{appendix_gradient}
The complete derivartion of the simplified SL ($\alpha, \beta = 1$) with respect to the logits is as follows:
\begin{equation}
\label{sce_dev_appd}
        \frac{\partial \ell_{sl}}{\partial z_j}  = - \sum_{k=1}^K q_k \frac{1}{p_k} \frac{\partial p_k}{\partial z_j}
        - \sum_{k=1}^K \frac{\partial p_k}{\partial z_j} \log q_k,
\end{equation}
where
\begin{equation}
        \frac{\partial p_k}{\partial z_j}  = \frac{\partial \Big(\frac{e^{z_k}}{\sum_{j=1}^K e^{z_j}}\Big)}{\partial z_j} = \frac{\frac{\partial e^{z_k}}{\partial z_j} (\sum_{j=1}^K e^{z_j}) - e^{z_k} \frac{\partial \big(\sum_{j=1}^K e^{z_j}\big) }{\partial z_j} }{(\sum_{j=1}^K e^{z_j})^2}.
\end{equation}
In the case of $ k = j$:
\begin{equation}
\label{ieqj}
    \begin{split}
        \frac{\partial p_k}{\partial z_j} & = \frac{\partial p_k}{\partial z_k} = \frac{e^{z_k}\big(\sum_{k=1}^K e^{z_k}\big) - (e^{z_k})^2}{(\sum_{k=1}^K e^{z_k})^2}\\
        & = \frac{e^{z_k}}{\sum_{k=1}^K e^{z_k}} - \Big(\frac{e^{z_k}}{\sum_{k=1}^K e^{z_k}} \Big)^2 \\
        & = p_k -p_k^2 = p_k(1-p_k);
    \end{split}
\end{equation}
In the case of $k \neq j$:
\begin{equation}
\label{ineqj}
    \begin{split}
        \frac{\partial p_k}{\partial z_j} & = \frac{0\cdot (\sum_{j=1}^K e^{z_j}) - e^{z_k}e^{z_j}}{(\sum_{j=1}^K e^{z_j})(\sum_{j=1}^K e^{z_j})} \\
        & = - \frac{e^{z_k}}{\sum_{j=1}^K e^{z_j}} \frac{e^{z_j}}{\sum_{j=1}^K e^{z_j}} \\
        & = -p_k p_j.
    \end{split}
\end{equation}
Combining Eq.~\eqref{ieqj} and \eqref{ineqj} into Eq.~\eqref{sce_dev_appd}, we can obtain:
\begin{equation}
    \begin{aligned}
        \frac{\partial \ell_{sl}}{\partial z_j} & = - \sum_{k=1}^K q_k \frac{1}{p_k} \frac{\partial p_k}{\partial z_j}
        - \sum_{k=1}^K \frac{\partial p_k}{\partial z_j} \log q_k \\
        & = - \sum_{k \neq j}^K \frac{q_k}{p_k}(-p_j p_k) - \frac{q_j}{p_j}(p_j(1-p_j)) - \sum_{k \neq j}^K (-p_j p_k) \log q_k  - p_j(1-p_j)\log q_j \\
        & = p_j - q_j + p_j(\sum_{k=1}^K p_k \log q_k - \log q_j).
    \end{aligned}
\end{equation}
If $q_{j} = q_y = 1$, then the gradient of SL is:
\begin{equation}
\label{y_1}
\begin{split}
    \frac{\partial \ell_{sl}}{\partial z_j} & = p_j - q_j + p_j(\sum_{k=1}^K p_k \log q_k - \log q_j) \\
    & = (p_j - 1) + p_j((1-p_j)A -0) \\
    & = \frac{\partial \ell_{ce}}{\partial z_j} - (A p_j^2 - Ap_j).
\end{split}
\end{equation}
Else if $q_j = 0$, then
\begin{equation}
\label{y_0}
\begin{aligned}
    \frac{\partial \ell_{sl}}{\partial z_j} & = p_j - q_j + p_j(\sum_{k=1}^K p_k \log q_k - \log q_j) \\
    & = p_j + p_j((1-p_y)A - A) \\
    & = \frac{\partial \ell_{ce}}{\partial z_j} - A p_jp_y.
\end{aligned}
\end{equation}

\end{document}